\theoremstyle{plain}
\newtheorem{thm}{Theorem}[section]
\newtheorem{theorem}[thm]{Theorem}
\newtheorem{corollary}[thm]{Corollary}
\newtheorem{lemma}[thm]{Lemma}
\newtheorem{proposition}[thm]{Proposition}
\theoremstyle{definition}
\newtheorem{remark}[thm]{Remark}
\newtheorem{example}[thm]{Example}
\newtheorem{definition}[thm]{Definition}
\newcommand{\set}[1]{ \{ #1 \} }
\newcommand{\DeltaF}[2]{\Delta{#1}(#2)}
\newcommand{\Lat}[3]{\mathcal{L}(#1,#2|#3)}
\newcommand{\indep}[3]{ \ensuremath{I(#1, #2 | #3)} }
\newcommand{\amodels}{\models^{\it a}}  
\newcommand{\mmodels}{\models^{\it m}}  
\newcommand{\nmmodels}{\not\models^{\it m}}  
\newcommand\T{\rule{0pt}{2.2ex}}
\newcommand\B{\rule[-1.0ex]{0pt}{0pt}}
\begin{document}

\title{On the Conditional Independence Implication Problem: \\A Lattice-Theoretic Approach\thanks{A version of this paper appeared in the Proceedings of the 24th Conference on Uncertainty in AI, 2008}}

\author{{\bf Mathias Niepert} \\  
Department of Computer Science\\  
Indiana University\\
Bloomington, IN, USA \\  
\texttt{mniepert@cs.indiana.edu}
\And
{\bf Dirk Van Gucht} \\  
Department of Computer Science\\  
Indiana University \\
Bloomington, IN, USA \\ 
\texttt{vgucht@cs.indiana.edu}
\And 
{\bf Marc Gyssens} \\
Department WNI \\  
Hasselt University \& Transnational \\
University of Limburg, Belgium \\ 
\texttt{marc.gyssens@uhasselt.be}
}
\maketitle

\begin{abstract}
\vspace{-3mm}
A lattice-theoretic framework is introduced that permits the study of
the conditional independence (CI) implication problem relative to the class of discrete probability measures.  Semi-lattices are associated with CI statements and a finite, sound and complete  inference system relative to semi-lattice inclusions is presented.  This system
is shown to be (1) sound and complete for saturated CI
statements, (2) complete for general CI statements, and (3) sound and
complete for stable CI statements.  These results yield a criterion that can be used to falsify instances of the implication problem and several heuristics are derived that approximate this ``lattice-exclusion" criterion in polynomial time. Finally, we provide experimental results that relate our work to results obtained from other existing inference algorithms.
\end{abstract}
\vspace{-3mm}

\section{Introduction}
\label{sec-introduction}
\vspace{-2mm}
Conditional independence is an important concept in many
calculi for dealing with knowledge and uncertainty in artificial
intelligence. The notion plays a fundamental role for learning and
reasoning in probabilistic systems which are successfully employed in
areas such as computer vision, computational biology, and
robotics. Hence, new theoretical findings and algorithmic improvements
have the potential to impact many fields of research.
A central issue for reasoning about conditional independence is the
\emph{probabilistic conditional independence implication problem}, that is, to
decide whether a CI statement is entailed by a set of other CI
statements relative to the class of discrete probability measures.  While it
remains open whether this problem is decidable, it is known that there
exists no finite, sound and complete inference system (Studen{\'y}~\cite{STU:1992}).  However, there exist finite sound inference systems
that have attracted special interest.  The most prominent is the \emph{semi-graphoid} axiom system which was introduced as a set of sound inference rules relative to the class of discrete probability measures (Pearl~\cite{GEI:1988}). One of the main contributions of this paper is to extend the semi-graphoids to a \emph{finite} inference system, denoted by $\mathcal{A}$, which we will show to
be (1) sound and complete for saturated CI statements, (2) complete
for general CI statements, and (3) sound and complete for stable
CI statements (de Waal and van der Gaag~\cite{WAAL:2004}), all relative to the class of discrete probability measures.  

\looseness=-1 The techniques we use to obtain these results are made possible through the introduction of a \emph{lattice-theoretic framework}.  In this
approach, semi-lattices are associated with conditional independence statements, and $\mathcal{A}$ is shown to be sound and complete relative to
certain inclusion relationships on these semi-lattices.  To make the
connection between this framework and the conditional independence
implication problem, we first link the latter to an addition-based version of
the problem. In particular, we introduce the \emph{additive
implication problem} for CI statements relative to certain classes of
real-valued functions and specify properties of these classes that
guarantee soundness and completeness, respectively, of $\mathcal{A}$ for the
implication problem.  Through the concept of
\emph{multi-information functions} induced by probability
measures (Studen{\'y}~\cite{STU:2005}), we link the additive
implication problem for this class of functions to the probabilistic CI implication problem.
The combination of the lattice-inclusion techniques and the
completeness result for conditional independence statements allows us to derive criteria that can be used to falsify instances of the implication problem.  We show experimentally that these criteria, some of which can be tested for in polynomial time, work very effectively, and we relate the
experimental results to those obtained from a \emph{racing algorithm}
introduced by Bouckaert and Studen{\'y}~\cite{BOU:2007}.

\section{CI Statements and System $\mathcal{A}$}
\label{SystemA}

We define CI statements and introduce the finite inference system
$\mathcal{A}$ for reasoning about the conditional independence
implication problem.  We will often write $AB$ for the union $A \cup
B$, $ab$ for the set $\set{a,b}$, and $a$ for the singleton set $\set{a}$ whenever the interpretation is
clear from the context.  Throughout the paper, $S$ denotes a finite implicit 
set of statistical variables.

\begin{definition}
The expression $\indep{A}{B}{C}$ where $A$, $B$, and $C$ are
pairwise disjoint subsets of $S$ is called a \emph{conditional
independence (CI) statement}.  If $ABC = S$ we say that
$\indep{A}{B}{C}$ is \emph{saturated}.  If either $A=\emptyset$ and/or
$B=\emptyset$ we say that $\indep{A}{B}{C}$ is \emph{trivial}.
\end{definition}

\begin{figure}[h!]
\begin{tabular}{|l|}
\hline \\
\begin{tabular}{lll}
$\indep{A}{\emptyset}{C}$ & {\bf Triviality} \\ 
$\indep{A}{B}{C} \rightarrow \indep{B}{A}{C}$ & {\bf Symmetry} \\ 
$\indep{A}{BD}{C} \rightarrow \indep{A}{D}{C}$ & {\bf Decomposition} \\ 
$\indep{A}{B}{CD} \wedge \indep{A}{D}{C}$ & {\bf Contraction}\\
$\quad\rightarrow \indep{A}{BD}{C}$ & \\ 
\\
$\indep{A}{B}{C} \rightarrow \indep{A}{B}{CD}$ & {\bf Strong union} \\ 
$\indep{A}{B}{C} \wedge \indep{D}{E}{AC} \wedge $ & {\bf Strong } \\
$\quad \indep{D}{E}{BC} \rightarrow \indep{D}{E}{C}$ & \ \ \ \ \  {\bf contraction}\\
\\ 
\end{tabular}
\\ \hline
\end{tabular}
\caption{\label{fig-inferencerules} 
The inference rules of system $\mathcal{A}$. }
\end{figure}

The set of inference rules in Figure~\ref{fig-inferencerules}
will be denoted by $\mathcal{A}$.  The \emph{triviality},
\emph{symmetry}, \emph{decomposition}, and \emph{contraction} rules
are part of the semi-graphoid axioms (Geiger \cite{GEI:1988}). \emph{Strong union} and \emph{strong contraction} are two additional inference rules. Note that
\emph{strong union} is not a sound inference rule relative to the class of discrete probability measures. The derivability of a CI statement $c$ from a set of CI statements
$\mathcal{C}$ under the inference rules of system $\mathcal{A}$ is denoted by
$\mathcal{C}\vdash c$. The \emph{closure} of $\mathcal{C}$ under
$\mathcal{A}$, denoted $\mathcal{C}^+$, is the set $\set{c \mid
\mathcal{C} \vdash c}$.

\begin{lemma}[de Waal and van der Gaag \cite{WAAL:2004}]
The inference rule \emph{composition} 
\begin{eqnarray*}
\label{composition}
\indep{A}{B}{C} \wedge \indep{A}{D}{C} \rightarrow
\indep{A}{BD}{C} \ \ \mbox{\bf Composition}
\end{eqnarray*}
can be derived using \emph{strong union} and \emph{contraction}.
\end{lemma}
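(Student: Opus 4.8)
The plan is to derive \emph{composition} by a short two-step deduction: first use \emph{strong union} to enlarge the conditioning set of one premise, and then use \emph{contraction} to fuse the two right-hand components.

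First, starting from the premise $\indep{A}{B}{C}$, I would apply \emph{strong union} with the set $D$ to obtain $\indep{A}{B}{CD}$. This step is legitimate: since the consequent $\indep{A}{BD}{C}$ of \emph{composition} is assumed to be a well-formed CI statement, the sets $B$ and $D$ are disjoint, $D$ is disjoint from $A$, and $D$ is disjoint from $C$; hence $A$, $B$, and $CD$ are pairwise disjoint, so $\indep{A}{B}{CD}$ is itself a CI statement.

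Second, I would combine the newly derived $\indep{A}{B}{CD}$ with the remaining premise $\indep{A}{D}{C}$ via \emph{contraction}, which is exactly the rule $\indep{A}{B}{CD} \wedge \indep{A}{D}{C} \rightarrow \indep{A}{BD}{C}$. This yields $\indep{A}{BD}{C}$, the consequent of \emph{composition}. Thus $\set{\indep{A}{B}{C}, \indep{A}{D}{C}} \vdash \indep{A}{BD}{C}$ using only \emph{strong union} and \emph{contraction}.

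I do not expect a genuine obstacle here; the derivation is immediate once the two rules are lined up in the right order. The only point requiring a moment's care is the bookkeeping of pairwise disjointness for the intermediate expression $\indep{A}{B}{CD}$, and this is guaranteed by the well-formedness of the premises together with that of the conclusion of \emph{composition}.
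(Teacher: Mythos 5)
Your derivation is correct and is exactly the intended one: apply \emph{strong union} to $\indep{A}{B}{C}$ to get $\indep{A}{B}{CD}$, then \emph{contraction} with $\indep{A}{D}{C}$ to obtain $\indep{A}{BD}{C}$; the paper states this lemma without proof (deferring to de Waal and van der Gaag), and your two-step argument is that standard derivation. One minor bookkeeping remark: disjointness of $B$ and $D$ does not actually follow from well-formedness of the consequent $\indep{A}{BD}{C}$ (which only constrains the union $BD$), but is part of the standing convention that all sets appearing in an inference rule are pairwise disjoint, so the intermediate statement $\indep{A}{B}{CD}$ is well-formed as you need.
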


\section{Lattice-Theoretic Framework}
\label{sec-framework}
\vspace{-1mm}
First, we introduce the lattice-theoretic framework which is at the core
of the theory developed in this paper.  The approach we take is made possible 
through the association of conditional independence statements with semi-lattices.  In this
section, we prove that inference system $\mathcal{A}$ is sound and complete
relative to specific semi-lattice inclusions.  This result forms the
backbone of our work on the conditional independence
implication problem.

\subsection{Semi-Lattices of CI Statements}
\label{semi-lattices-CI}

Given two subsets $A$ and $B$ of $S$, we will write $[A, B]$ for the lattice
$\{U \mid A\subseteq U\ \&\ U\subseteq B\}$.  We will now associate
semi-lattices with conditional independence statements.

\begin{definition}
\label{def-latdec} 
Let $\indep{A}{B}{C}$ be a CI statement.  The \emph{semi-lattice} of
$\indep{A}{B}{C}$ is defined by $\Lat{A}{B}{C} = [C, S] - ([A, S] \cup
[B, S])$. 
\end{definition}

We will often write $\mathcal{L}(c)$ to denote the
semi-lattice of a conditional independence statement $c$, and $\mathcal{L}(\mathcal{C})$ to denote the union of semi-lattices, $\bigcup_{c' \in
\mathcal{C}}\mathcal{L}(c')$, of a set of conditional independence statements $\mathcal{C}$. Using the notion of \emph{witnesses} of a conditional independence statement, we can rewrite the associated semi-lattice as a difference-free union of lattices.

\begin{definition}
\label{def-witness}
Let $\indep{A}{B}{C}$ be a CI statement. The set of all witness sets
of $\indep{A}{B}{C}$ is defined as $\mathcal{W}(A, B | C) = \set{
\set{a, b} \mid a \in A \mbox{ and } b \in B }$.
\end{definition}

Note that if $\indep{A}{B}{C}$ is trivial, then $\mathcal{W}(A,B|C) =\emptyset$.

\begin{lemma}
\label{lem-witnessdec}
Let $c = \indep{A}{B}{C}$ be a CI statement. Then
$\mathcal{L}(c) = \bigcup_{W \in \mathcal{W}(c)}[C, \overline{W}]$.
\end{lemma}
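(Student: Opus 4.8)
The plan is to unravel both sides into explicit descriptions of which subsets $U\subseteq S$ they contain, and then to observe that these descriptions coincide by an elementary quantifier manipulation.

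First I would rewrite the left-hand side. By Definition~\ref{def-latdec}, $\Lat{A}{B}{C}=[C,S]-([A,S]\cup[B,S])$, and since every set under consideration is automatically a subset of $S$, membership $U\in[A,S]$ is equivalent to $A\subseteq U$. Hence
\[
\mathcal{L}(c)=\set{U \mid C\subseteq U\subseteq S,\ A\not\subseteq U,\ B\not\subseteq U}.
\]
Next I would rewrite the right-hand side. For a witness set $W=\set{a,b}$ with $a\in A$ and $b\in B$, we have $W\subseteq S$ and, because $C$ is disjoint from both $A$ and $B$, also $C\subseteq\overline{W}$; thus $U\in[C,\overline{W}]$ iff $C\subseteq U\subseteq S$ and $U\cap W=\emptyset$, i.e.\ $a\notin U$ and $b\notin U$. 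Taking the union over all $W\in\mathcal{W}(c)$ gives
\[
\bigcup_{W\in\mathcal{W}(c)}[C,\overline{W}]=\set{U\mid C\subseteq U\subseteq S,\ \exists a\in A,\ \exists b\in B:\ a\notin U\ \text{and}\ b\notin U}.
\]

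The core step is then to note that, for a fixed $U$, the conjunction ``$A\not\subseteq U$ and $B\not\subseteq U$'' asserts exactly that there is some $a\in A$ with $a\notin U$ and some $b\in B$ with $b\notin U$, which is precisely the existential condition on the right. Both inclusions follow at once: given elements witnessing $A\not\subseteq U$ and $B\not\subseteq U$ we obtain a witness pair $(a,b)$, and conversely such a pair forces $A\not\subseteq U$ and $B\not\subseteq U$. The ambient constraint $C\subseteq U\subseteq S$ appears identically on both sides, so it plays no role beyond bookkeeping.

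The only subtlety — and the one place a careless argument could slip — is the trivial case $A=\emptyset$ or $B=\emptyset$. Then $\mathcal{W}(c)=\emptyset$, so the right-hand side is the empty union, namely $\emptyset$; on the left, say $A=\emptyset$ gives $[A,S]=[\,\emptyset,S\,]=\set{U\mid U\subseteq S}\supseteq[C,S]$, whence $[C,S]-[A,S]=\emptyset$ and $\mathcal{L}(c)=\emptyset$ as well, in agreement. With this case dispatched and $A,B$ otherwise nonempty, the quantifier equivalence above completes the proof. I do not anticipate any genuine obstacle here: the content is entirely set-membership bookkeeping, and the main thing to watch is keeping the constraint $C\subseteq U\subseteq S$ in force on both sides throughout.
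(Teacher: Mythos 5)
Your proof is correct: unwinding Definition~\ref{def-latdec} and Definition~\ref{def-witness} and noting that ``$A\not\subseteq U$ and $B\not\subseteq U$'' is equivalent to the existence of a witness pair $\set{a,b}$ disjoint from $U$ is exactly the intended argument, and your separate check of the trivial case $A=\emptyset$ or $B=\emptyset$ (where both sides are empty) is careful, even though the quantifier equivalence already covers it. The paper states this lemma without proof, treating it as routine set-membership bookkeeping, so your write-up simply supplies the omitted verification and matches the intended reasoning.
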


\begin{example}
Let $S = \set{a, b, c, d}$ and let $\indep{bc}{d}{a}$ be a CI statement. Then, $\Lat{bc}{d}{a} = [a, S] - ([bc, S] \cup [d, S])  =  \set{a, ab, ac}$. Furthermore, $\mathcal{W}(bc, d | a) = \set{bd, cd}$ and, therefore, 
$\Lat{bc}{d}{a} = [a, ac] \cup [a, ab] = \set{a, ab, ac}$, using Lemma~\ref{lem-witnessdec}.
\end{example}

\subsection{Soundness and Completeness of Inference System $\mathcal{A}$
for Semi-Lattice Inclusion}
\label{sound-complete-A}

We will prove that system $\mathcal{A}$ is sound and
complete relative to semi-lattice inclusion.  First, we show that if a
CI statement can be derived from a set of CI statements under 
$\mathcal{A}$, then we have a set inclusion relationship between their
associated semi-lattices.

\begin{proposition}
\label{prop-easy}
Let $\mathcal{C}$ be a set of CI statements, and let $c$ be a CI
statement. If $\mathcal{C}\vdash c$, then $\mathcal{L}(\mathcal{C}) \supseteq \mathcal{L}(c)$.
\end{proposition}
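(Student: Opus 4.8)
The proof is a routine induction on the length of the derivation $\mathcal{C} \vdash c$, and the work is entirely in verifying that each of the six inference rules of $\mathcal{A}$ preserves the semi-lattice inclusion. The base case is immediate: if $c \in \mathcal{C}$, then $\mathcal{L}(c) \subseteq \bigcup_{c' \in \mathcal{C}} \mathcal{L}(c') = \mathcal{L}(\mathcal{C})$. For the inductive step, it suffices to show that whenever $c$ is obtained from premises $c_1, \dots, c_k$ by one of the rules, we have $\mathcal{L}(c) \subseteq \mathcal{L}(c_1) \cup \dots \cup \mathcal{L}(c_k)$; combined with the induction hypothesis $\mathcal{L}(c_i) \subseteq \mathcal{L}(\mathcal{C})$ for each $i$, this gives $\mathcal{L}(c) \subseteq \mathcal{L}(\mathcal{C})$.

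So the core of the argument is a rule-by-rule check, and I would use the witness description $\mathcal{L}(\indep{A}{B}{C}) = \bigcup_{W \in \mathcal{W}(A,B|C)} [C, \overline{W}]$ from Lemma~\ref{lem-witnessdec} to make each check transparent. \emph{Triviality}: $\mathcal{W}(A,\emptyset|C) = \emptyset$, so $\mathcal{L}(\indep{A}{\emptyset}{C}) = \emptyset \subseteq$ anything. \emph{Symmetry}: $\mathcal{W}(A,B|C) = \mathcal{W}(B,A|C)$ since witness sets $\set{a,b}$ are unordered, so the two semi-lattices are literally equal. \emph{Decomposition} ($\indep{A}{BD}{C} \rightarrow \indep{A}{D}{C}$): $\mathcal{W}(A,D|C) \subseteq \mathcal{W}(A,BD|C)$, hence the union over the smaller witness family is contained in the union over the larger one. \emph{Strong union} ($\indep{A}{B}{C} \rightarrow \indep{A}{B}{CD}$): here the witness sets are the same but the conditioning set grows from $C$ to $CD$, so each lattice $[CD, \overline{W}]$ is a sublattice of $[C, \overline{W}]$ (the upper bound $\overline{W}$ is unchanged because $W$ depends only on $A$ and $B$); note $D \subseteq \overline{W}$ is needed and holds since $D$ is disjoint from $A$ and $B$, hence from every $W = \set{a,b}$, so $D \subseteq \overline{W}$. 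The two contraction-style rules, \emph{contraction} and \emph{strong contraction}, are the substantive cases.

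For \emph{contraction} ($\indep{A}{B}{CD} \wedge \indep{A}{D}{C} \rightarrow \indep{A}{BD}{C}$), I would argue that $\mathcal{W}(A,BD|C) = \mathcal{W}(A,B|C) \cup \mathcal{W}(A,D|C)$, so $\mathcal{L}(\indep{A}{BD}{C}) = \mathcal{L}(\indep{A}{B}{C}) \cup \mathcal{L}(\indep{A}{D}{C})$; it then remains to absorb the $\mathcal{L}(\indep{A}{B}{C})$ piece into $\mathcal{L}(\indep{A}{B}{CD})$. For a witness $W = \set{a,b}$ with $a \in A$, $b \in B$, the lattice contributed to $\mathcal{L}(\indep{A}{B}{C})$ is $[C, \overline{W}]$, and a set $U$ in it either contains all of $D$ — in which case $U \in [CD, \overline{W}]$, contributing to $\mathcal{L}(\indep{A}{B}{CD})$ — or misses some $d \in D$, in which case $U \in [C, \overline{\set{a,d}}]$, a lattice of $\mathcal{L}(\indep{A}{D}{C})$ (using $U \cap \set{a} = \emptyset$ since $a \in A$ and $a \notin C \subseteq U$... careful: one needs $a \notin U$, which holds because $U \in [C,\overline{W}]$ means $U \subseteq \overline{W} = S \setminus \set{a,b}$). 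This case split — ``does $U$ swallow all of $D$?'' — is the key combinatorial idea. \emph{Strong contraction} ($\indep{A}{B}{C} \wedge \indep{D}{E}{AC} \wedge \indep{D}{E}{BC} \rightarrow \indep{D}{E}{C}$) is handled by a similar but slightly more involved split: given $U \in [C, \overline{\set{d,e}}]$ with $d \in D$, $e \in E$, we want $U$ to land in one of the three premise semi-lattices. If $A \subseteq U$ then $U \in [AC, \overline{\set{d,e}}] \subseteq \mathcal{L}(\indep{D}{E}{AC})$; if $B \subseteq U$ then symmetrically $U \in \mathcal{L}(\indep{D}{E}{BC})$; and if $U$ misses some $a \in A$ and some $b \in B$, then $\set{a,b}$ is a witness of $\indep{A}{B}{C}$ with $a,b \notin U$ and $C \subseteq U$, so $U \in [C, \overline{\set{a,b}}] \subseteq \mathcal{L}(\indep{A}{B}{C})$.

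The main obstacle is bookkeeping rather than depth: one must be careful about disjointness hypotheses (so that the relevant elements really lie outside the conditioning set, and so that the constructed witness sets are genuine witnesses), and about the edge cases where a premise is trivial (empty witness family, empty semi-lattice) so that the case split degenerates. Once the six rules are dispatched, the induction closes immediately.
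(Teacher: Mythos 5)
Your proof is correct and follows essentially the same route as the paper's: a rule-by-rule verification of the semi-lattice inclusion, with exactly the paper's case split for \emph{strong contraction} ($A\subseteq U$, $B\subseteq U$, or neither). The paper writes out only the \emph{strong contraction} case and declares the other rules analogous, so your version merely supplies the omitted details (phrased via the witness characterization of Lemma~\ref{lem-witnessdec} rather than the raw definition), which is a stylistic rather than substantive difference.
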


\begin{proof}
We prove the statement for {strong contraction}. The proofs
for the other inference rules in $\mathcal{A}$ are analogous and are omitted. Let $U
\in \Lat{D}{E}{C}$. Then $U \supseteq C$. If $U \supseteq A$, then $U
\in \Lat{D}{E}{AC}$. If $U \supseteq B$, then $U \in
\Lat{E}{D}{BC}$. If $U \nsupseteq A$ and $U \nsupseteq B$, then $U \in
\Lat{A}{B}{C}$.
\end{proof}

A CI statement can be equivalent to a set of other CI statements with respect to the inference system $\mathcal{A}$.  The following definition of a \emph{witness decomposition} of a CI statement is aimed to prove this property.

\begin{definition}
\label{def-decompositions}
The \emph{witness decomposition} of the CI statement $\indep{A}{B}{C}$
is defined by $wdec(A, B | C) := \set{\indep{a}{b}{C} \mid a \in A
\mbox{ and } b \in B}.$
\end{definition}

A useful property of the witness decomposition of a CI statement is that its closure under $\mathcal{A}$ is the same as the closure of the CI statement itself.  In addition, the semi-lattice of a CI statement is equal to the semi-lattice of its witness decomposition.

\begin{proposition}
\label{lem-closure-equiv}
Let $c$ be a CI statement. (1) $\set{ c }^+ = wdec(c)^+$; and
(2) $\mathcal{L}(c) = \bigcup_{ c' \in wdec(c)} \mathcal{L}(c')$.
\end{proposition}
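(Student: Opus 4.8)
The plan is to treat the two parts separately. Part~(2) is immediate from Lemma~\ref{lem-witnessdec}: each $c' = \indep{a}{b}{C}$ occurring in $wdec(c)$ has singleton components, so its collection of witness sets $\mathcal{W}(c')$ equals $\set{\set{a,b}}$, and Lemma~\ref{lem-witnessdec} then gives $\mathcal{L}(c') = [C,\overline{\set{a,b}}]$. Unioning over all $a\in A$ and $b\in B$ reproduces $\bigcup_{W\in\mathcal{W}(c)}[C,\overline{W}]$, which is exactly $\mathcal{L}(c)$ by Lemma~\ref{lem-witnessdec} applied to $c$ itself. If $c$ is trivial, both sides are empty, since $wdec(c)$ and $\mathcal{W}(c)$ are then empty.

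For part~(1), I would exploit that $(\cdot)^+$ is a monotone, idempotent closure operator, so it suffices to prove $wdec(c)\subseteq\set{c}^+$ and $c\in wdec(c)^+$. For the first inclusion, fix $a\in A$ and $b\in B$; starting from $c=\indep{A}{B}{C}$, one application of \emph{decomposition} (writing $B$ as the disjoint union $(B\setminus\set{b})\cup\set{b}$), then \emph{symmetry}, then a second \emph{decomposition} peeling $A$ down to $\set{a}$, then a final \emph{symmetry}, yields $\indep{a}{b}{C}$. Hence every member of $wdec(c)$ lies in $\set{c}^+$.

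For the second inclusion, assume first that $A,B\neq\emptyset$ (the trivial case reduces to the \emph{triviality} axiom together with \emph{symmetry}). The key tool is that \emph{composition}, $\indep{A}{B}{C}\wedge\indep{A}{D}{C}\rightarrow\indep{A}{BD}{C}$, is derivable in $\mathcal{A}$ by the de~Waal--van~der~Gaag lemma above. I would iterate it: for each fixed $a\in A$, repeatedly compose the statements $\set{\indep{a}{b}{C}\mid b\in B}\subseteq wdec(c)$ to obtain $\indep{a}{B}{C}$; then apply \emph{symmetry} to get $\indep{B}{a}{C}$ for every $a\in A$, and iterate \emph{composition} once more, now over $a\in A$, to assemble $\indep{B}{A}{C}$; a last \emph{symmetry} gives $c$. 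Thus $c\in wdec(c)^+$, and combining the two inclusions yields $\set{c}^+ = wdec(c)^+$.

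The argument is essentially bookkeeping rather than conceptually hard. The points that need care are: justifying the free use of \emph{composition}, which is not a primitive rule of $\mathcal{A}$, via its cited derivation from \emph{strong union} and \emph{contraction}; phrasing the repeated applications of \emph{decomposition} and \emph{composition} cleanly, most naturally as an induction on $|A|+|B|$; and checking that the trivial and empty boundary cases keep all the union formulas and derivations well-defined. I expect this last bit of bookkeeping --- making sure nothing degenerates when $A$, $B$, or $B\setminus\set{b}$ is empty --- to be the only genuine source of friction.
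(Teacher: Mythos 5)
Your proposal is correct and follows essentially the same route as the paper: decomposition (with symmetry) for $wdec(c)\subseteq\set{c}^{+}$, repeated use of the derived \emph{composition} rule plus \emph{symmetry} to recover $c$ from $wdec(c)$, and Lemma~\ref{lem-witnessdec} for part~(2). Your explicit treatment of the trivial/empty boundary cases is a minor addition the paper leaves implicit, but it does not change the argument.
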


\begin{proof}
To prove the first statement, let $c = \indep{A}{B}{C}$ and
$\indep{a}{b}{C} \in wdec(c)$.  Then $\indep{a}{b}{C}$ can be derived
from $\indep{A}{B}{C}$ by applications of the \emph{decomposition}
rule. Hence, $wdec(c)^+ \subseteq \set{ c }^+$.  By
Definition~\ref{def-decompositions} we know that for every $a \in A$
and for all $b \in B$ one has $\indep{a}{b}{C} \in wdec(c)$.  By repeatedly
applying \emph{composition}, we can infer the CI statement
$\indep{a}{B}{C}$. Hence, for all $a \in A$, one has $\indep{a}{B}{C}
\in wdec(c)^+$ and by \emph{symmetry} $\indep{B}{a}{C} \in
wdec(c)^+$.  Again, by applying \emph{composition} repeatedly, we can infer
$\indep{B}{A}{C}$ and by \emph{symmetry} $\indep{A}{B}{C}$. Hence, $\set{ c
}^+ \subseteq wdec(c)^+$. 

To prove the second statement, let $\indep{a}{b}{C} \in wdec(c)$ and
$W = \set{a, b}$. Then $\mathcal{L}(a, b | C) = [C,
\overline{W}]$. The statement now follows directly from
Definition~\ref{def-decompositions} and Lemma~\ref{lem-witnessdec}.
\end{proof}

We are now in the position to prove the main result concerning the soundness and completeness of the inference system $\mathcal{A}$ for semi-lattice inclusion.

\begin{theorem}
\label{theo-derivablelatdec}
Let $\mathcal{C}$ be a set of CI statements, and let $c$ be a CI
statement.  Then $\mathcal{C}\vdash c$ if and only if $\mathcal{L}(\mathcal{C}) \supseteq \mathcal{L}(c)$.
\end{theorem}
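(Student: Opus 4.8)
The ``only if'' direction is immediate from Proposition~\ref{prop-easy}, so the content is the converse: assuming $\mathcal{L}(\mathcal{C})\supseteq\mathcal{L}(c)$, I must produce a derivation $\mathcal{C}\vdash c$. The plan is first to reduce to witness statements and then to argue by induction on the height of the target semi-lattice.

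For the reduction, I would invoke Proposition~\ref{lem-closure-equiv}: replacing $c$ by $wdec(c)$ and each statement of $\mathcal{C}$ by its witness decomposition changes neither the closures under $\mathcal{A}$ nor the associated semi-lattices, and $\mathcal{C}\vdash c$ holds iff $\mathcal{C}$ derives every member of $wdec(c)$ (trivial statements have empty semi-lattice and are always derivable, so they can be discarded). Hence it suffices to prove the following: if $c=\indep{a}{b}{C}$ and $\mathcal{C}$ consists of statements $\indep{a_i}{b_i}{C_i}$ with singleton witnesses, and $[C,\overline{ab}]=\mathcal{L}(c)\subseteq\mathcal{L}(\mathcal{C})=\bigcup_i[C_i,\overline{\set{a_i,b_i}}]$ (using Lemma~\ref{lem-witnessdec}), then $\mathcal{C}\vdash\indep{a}{b}{C}$.

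I would prove this by induction on $n:=|S\setminus(\set{a,b}\cup C)|$, the height of $[C,\overline{ab}]$. The bottom element $C$ of this lattice lies in $\mathcal{L}(\mathcal{C})$, so some $\indep{a_j}{b_j}{C_j}\in\mathcal{C}$ satisfies $C_j\subseteq C$ and $\set{a_j,b_j}\cap C=\emptyset$; adjoining $C\setminus C_j$ to the conditioning set via \emph{strong union} (and \emph{symmetry} if needed) gives $\mathcal{C}\vdash\indep{a_j}{b_j}{C}$. Now split on $|\set{a_j,b_j}\cap\set{a,b}|$. If it equals $2$, then $\set{a_j,b_j}=\set{a,b}$ and we are done; this also covers the base case $n=0$, where $S\setminus C=\set{a,b}$ forces $\set{a_j,b_j}=\set{a,b}$. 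If it equals $1$, say (up to the symmetry $a\leftrightarrow b$) $\set{a_j,b_j}=\set{a,v}$ with $v\notin\set{a,b}\cup C$, then $\mathcal{C}\vdash\indep{a}{v}{C}$; since $\mathcal{L}(\indep{a}{b}{Cv})=[Cv,\overline{ab}]\subseteq[C,\overline{ab}]\subseteq\mathcal{L}(\mathcal{C})$ and its height is $n-1$, the induction hypothesis gives $\mathcal{C}\vdash\indep{a}{b}{Cv}$, and then \emph{contraction} yields $\indep{a}{bv}{C}$ and \emph{decomposition} yields $\indep{a}{b}{C}$. If it equals $0$, say $\set{a_j,b_j}=\set{u,v}$ with $u,v\notin\set{a,b}\cup C$, then $\mathcal{C}\vdash\indep{u}{v}{C}$, and applying the induction hypothesis to the two strictly shorter lattices $[Cu,\overline{ab}]$ and $[Cv,\overline{ab}]$ gives $\mathcal{C}\vdash\indep{a}{b}{Cu}$ and $\mathcal{C}\vdash\indep{a}{b}{Cv}$; \emph{strong contraction} then combines $\indep{u}{v}{C}$, $\indep{a}{b}{Cu}$, $\indep{a}{b}{Cv}$ into $\indep{a}{b}{C}$.

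The delicate points I anticipate are the choice of induction parameter together with the decision to case-split on how the \emph{bottom} element of the target lattice is covered (covering the top only tells us the witness pair must be $\set{a,b}$, which is too weak), and the recognition that the two non-semi-graphoid rules of $\mathcal{A}$ are exactly what the remaining cases require: ordinary \emph{contraction} eliminates a covering witness that shares one variable with $\set{a,b}$, while \emph{strong contraction} is precisely tailored to a covering witness $\set{u,v}$ disjoint from $\set{a,b}$. Note also that \emph{strong union} — unsound for probability measures but part of $\mathcal{A}$ — is indispensable here, as it is what lowers the conditioning set of the covering statement down to $C$. The disjointness side-conditions of all rule applications, and the bookkeeping of the witness-decomposition reduction, are routine to verify.
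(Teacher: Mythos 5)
Your proposal is correct and follows essentially the same route as the paper: reduce both $c$ and $\mathcal{C}$ to witness decompositions via Proposition~\ref{lem-closure-equiv} and Lemma~\ref{lem-witnessdec}, then induct over the lattice $[C,\overline{ab}]$ using a covering witness statement lifted by \emph{strong union}, with the identical three-way case split resolved by \emph{contraction}/\emph{decomposition} and \emph{strong contraction}. The only difference is presentational: the paper runs a downward induction over the elements $V$ of a fixed lattice, proving $\indep{a}{b}{V}$ derivable for each, whereas you recurse on targets $\indep{a}{b}{Cv}$ of smaller lattice height, which unfolds to the same derivation.
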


\vspace{-2mm}
\begin{proof} 
We already know by Proposition~\ref{prop-easy} that
if $\mathcal{C}\vdash c$ then $\mathcal{L}(\mathcal{C}) \supseteq \mathcal{L}(c)$.
We now proceed
to show the other direction. Let us denote $wdec(\mathcal{C})
= \bigcup_{c' \in \mathcal{C}} wdec(c')$ and let $\indep{a}{b}{C} \in
wdec(c)$ with $W=\set{a, b}$. From the assumption $\mathcal{L}(\mathcal{C}) \supseteq \mathcal{L}(c)$ and
Proposition~\ref{lem-closure-equiv}(2) it follows that
$\mathcal{L}(\mathcal{C}) \supseteq \mathcal{L}(a, b| C)$~(1).  By
Proposition~\ref{lem-closure-equiv}(1) it suffices to show that
$\indep{a}{b}{C} \in wdec(\mathcal{C})^+$. However, we will prove the
stronger statement $\forall V \in [C, \overline{W}]: \indep{a}{b}{V}
\in wdec(\mathcal{C})^+$ by downward induction on the lattice $[C,
\overline{W}]$. 

For the base case we need to show that
$\indep{a}{b}{\overline{W}} \in wdec(\mathcal{C})^+$.  By $(1)$
$\overline{W}$ is in $\mathcal{L}(\mathcal{C})$. Hence, by
Proposition~\ref{lem-closure-equiv}(1), there exists a CI statement
$\indep{a}{b}{C'} \in wdec(\mathcal{C})$ such that $\overline{W} \in
\mathcal{L}(a, b | C')$. Now, since $C' \subseteq \overline{W}$, we
can derive $\indep{a}{b}{\overline{W}}$ through \emph{strong union}. 

For the induction step, let $C \subseteq V \subset \overline{W}$. The induction
hypothesis states that for all $V'$ with $V \subset V' \subseteq
\overline{W}$ one has $\indep{a}{b}{V'} \in wdec(\mathcal{C})^+$. By
$(1)$ $V$ is in $\mathcal{L}(\mathcal{C})$. Hence, by
Proposition~\ref{lem-closure-equiv}(1), there exists a CI statement
$\indep{a'}{b'}{C'} \in wdec(\mathcal{C})$ such that $V \in
\mathcal{L}(a', b' | C')$. Since $C' \subseteq V$ we can use
\emph{strong union} to derive $\indep{a'}{b'}{V}$. Let $W' = \set{a',
b'}$. Note that $W' \cap V = \emptyset$. We distinguish three cases:

\begin{itemize}
\item $W' = W$. Then we are done.
\item Exactly one of the two elements in $W'$ is not in $W$. Without
loss of generality let this element be $b'$. Then we can use \emph{contraction} 
on the statements $\indep{a}{b'}{V}$ and
$\indep{a}{b}{Vb'}$ (the latter is in $wdec(\mathcal{C})^+$ by the induction hypothesis)
to derive $\indep{a}{b'b}{V}$, and finally \emph{decomposition} to derive
$\indep{a}{b}{V}$.
\item Both elements in $W'$ are not in $W$. We can use
\emph{strong contraction} on the statements $\indep{a'}{b'}{V}$,
$\indep{a}{b}{Va'}$, and $\indep{a}{b}{Vb'}$ (the latter two are in
$wdec(\mathcal{C})^+$ by the induction hypothesis) to derive $\indep{a}{b}{V}$.
\end{itemize}

This concludes the proof.
\end{proof}

\begin{example}
Let $S = \set{a, b, c, d}$, let $\mathcal{C} = \set{\indep{a}{b}{\emptyset}, \indep{c}{d}{a}, \indep{c}{d}{b}}$ and let $c = \indep{c}{d}{\emptyset}$. We can derive $c$ from $\mathcal{C}$ using the inference rule \emph{strong contraction}. In addition, $\mathcal{L}(\mathcal{C}) = \set{\emptyset, c, d, cd} \cup \set{a, ab} \cup \set{b, ab} = \set{\emptyset, a, b, c, d, ab, cd}$ and $\mathcal{L}(c) = \set{\emptyset, a, b, ab}$, and, therefore, $\mathcal{L}(\mathcal{C}) \supseteq \mathcal{L}(c)$.
\end{example}

\section{The Additive Implication Problem for CI Statements}
\label{additive-implication}

An important result in the study of the implication problem relative to the class of discrete probability measures was gained by Studen{\'y} who linked it to an additive implication problem (Studen{\'y}~\cite{STU:2005}).  More specifically, it was shown that for every CI statement $\indep{A}{B}{C}$, a discrete probability measure $P$ satisfies $\indep{A}{B}{C}$ if and only if the \emph{multi-information function}\footnote{The multi-information function of a probability measure will be formally defined in Section~6.} $M_P$ induced by $P$ satisfies the equality $M_P(C)+M_P(ABC)=M_P(AC)+M_P(BC)$.  Thus, the \emph{multiplication-based} probabilistic CI implication problem was related to an \emph{addition-based} implication problem.  It is this duality that is at the basis of the results developed in this section.  However, rather than immediately focusing on specific classes of \emph{multi-information functions}, which is
what we pursue in Section~\ref{prob-con-ind-imp}, we first consider the additive implication problem for CI statements relative to arbitrary classes of real-valued functions.

By a \emph{real-valued function}, we will always mean a function
$F:2^S\rightarrow \mathbf{R}$, i.e., a function that maps each subset
of $S$ into a real number.

\begin{definition}
\label{def-differentialconstraints}
Let $\indep{A}{B}{C}$ be a CI statement, and let $F$ be a real-valued
function. We say that $F$ \emph{a-satisfies} $\indep{A}{B}{C}$, and
write $\amodels_{F} \indep{A}{B}{C}$, if $F(C) + F(ABC) = F(AC) +
F(BC)$.
\end{definition}

Relative to the notion of \emph{a-satisfaction}, we can now define the
\emph{additive implication problem} for conditional independence statements.

\begin{definition}[Additive implication problem]
\label{def-logicalimplication}
Let $\mathcal{C}$ be a set of CI statements, let $c$ be a CI
statement, and let $\mathcal{F}$ be a class of real-valued functions.
We say that $\mathcal{C}$ \emph{a-implies} $c$ relative to
$\mathcal{F}$, and write $\mathcal{C} \amodels_{\mathcal{F}} c$, if
each function $F \in \mathcal{F}$ that \emph{a-satisfies} the CI statements
in $\mathcal{C}$ also \emph{a-satisfies} the CI statement $c$.
\end{definition}

We now define the notion of density of a real-valued function.  The
density is again a real-valued function and plays a crucial role in
reasoning about additive implication problems.

\begin{definition}
\label{def-densities}
Let $F$ be a real-valued function.  The \emph{density}\footnote{What we call the \emph{density} is sometimes referred to as the \emph{M\"{o}bius inversion} of a real-valued function.} of $F$ is the real-valued function $\Delta F$ defined by $\DeltaF{F}{X} = \sum_{X\subseteq U\subseteq S} (-1)^{|U|-|X|}F(U)$, for each $X \subseteq S$.
\end{definition}

The following relationship between a real-valued function and its
\emph{density} justifies the name. 

\begin{proposition}
\label{prop-densities}
Let $F$ be a real-valued function.  Then, for each $X\subseteq S$,
$F(X)=\sum_{X\subseteq U\subseteq S}\, \DeltaF{F}{U}$.
\end{proposition}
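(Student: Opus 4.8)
The plan is to recognize Proposition~\ref{prop-densities} as the M\"{o}bius inversion formula on the Boolean lattice $2^S$ and to prove it by a direct double-summation argument. First I would substitute the definition of $\Delta F$ (Definition~\ref{def-densities}) into the right-hand side $\sum_{X\subseteq U\subseteq S}\DeltaF{F}{U}$, which yields a sum over all pairs $(U,V)$ with $X\subseteq U\subseteq V\subseteq S$, the summand being $(-1)^{|V|-|U|}F(V)$. Next I would interchange the order of summation, summing first over $V$ in $[X,S]$ and then, for each such $V$, over $U$ in $[X,V]$; this is legitimate because the index set is finite. Collecting the coefficient of $F(V)$ then reduces the right-hand side to $\sum_{X\subseteq V\subseteq S}F(V)\bigl(\sum_{X\subseteq U\subseteq V}(-1)^{|V|-|U|}\bigr)$.

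The key combinatorial step is to evaluate the inner alternating sum. Writing $k=|V|-|X|$, the sets $U$ with $X\subseteq U\subseteq V$ and $|U|-|X|=j$ are exactly the $\binom{k}{j}$ subsets of $V\setminus X$ of size $j$, so the inner sum equals $\sum_{j=0}^{k}\binom{k}{j}(-1)^{k-j}=(1-1)^{k}$, which is $1$ when $k=0$, i.e.\ when $V=X$, and $0$ otherwise. Substituting this back, every term with $V\neq X$ vanishes, and the single surviving term is $F(X)$, which is precisely the asserted identity.

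I do not anticipate a genuine obstacle here; the only points requiring care are the bookkeeping in the interchange of the two nested summations and the correct form of the alternating binomial identity (equivalently, the fact that the M\"{o}bius function of the Boolean lattice is $\mu(U,V)=(-1)^{|V|-|U|}$). As an alternative one could argue by induction on $|S\setminus X|$, peeling off one element of $S\setminus X$ at a time, but the double-sum computation is shorter and more transparent, so that is the route I would take.
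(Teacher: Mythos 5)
Your proof is correct: the substitution, interchange of the finite double sum, and the alternating binomial identity $\sum_{j=0}^{k}\binom{k}{j}(-1)^{k-j}=(1-1)^{k}$ together give exactly the claimed identity. The paper states this proposition without proof (it is the standard M\"{o}bius inversion on the Boolean lattice), and your argument is precisely the canonical one it implicitly relies on, so there is nothing to add.
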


The \emph{a-satisfaction} of a real-valued function for a CI statement can
be characterized in terms of an equation involving its density
function.  This characterization is central in developing our results
and is a special case of a more general result by Sayrafi and Van
Gucht who used it in their study of the \emph{frequent itemset 
mining problem} (Sayrafi and Van Gucht \cite{SV:2005c}).

\begin{proposition}
\label{prop-differentialsasdensities}
Let $\indep{A}{B}{C}$ be a CI statement and and let $F$ be a
real-valued function.  Then, $\amodels_{F} \indep{A}{B}{C}$ if and only
if $\sum_{U \in \mathcal{L}(A, B | C)} \DeltaF{F}{U} = 0.$
\end{proposition}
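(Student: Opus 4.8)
The plan is to start from the definition of $\amodels_F \indep{A}{B}{C}$, namely $F(C) + F(ABC) = F(AC) + F(BC)$, and rewrite each of the four terms using Proposition~\ref{prop-densities}, which expresses $F(X)$ as $\sum_{X \subseteq U \subseteq S} \DeltaF{F}{U}$. This turns the equation into a statement purely about sums of density values over the four principal filters $[C,S]$, $[ABC,S]$, $[AC,S]$, and $[BC,S]$. Moving everything to one side, $\amodels_F \indep{A}{B}{C}$ becomes equivalent to
\[
\sum_{U \in [C,S]} \DeltaF{F}{U} - \sum_{U \in [AC,S]} \DeltaF{F}{U} - \sum_{U \in [BC,S]} \DeltaF{F}{U} + \sum_{U \in [ABC,S]} \DeltaF{F}{U} = 0.
\]

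The key combinatorial step is then to identify the signed multiset of filters on the left with an indicator function on subsets $U$. First I would note that, since $A$, $B$, $C$ are pairwise disjoint, $[AC,S] \cap [BC,S] = [ABC,S]$ (a set $U$ contains both $AC$ and $BC$ iff it contains $ABC$), and both $[AC,S]$ and $[BC,S]$ are contained in $[C,S]$. So by inclusion--exclusion the coefficient of $\DeltaF{F}{U}$ in the displayed sum is $1$ if $U \in [C,S]$ but $U \notin [AC,S] \cup [BC,S]$, and $0$ otherwise: if $U$ lies in exactly one of $[AC,S]$, $[BC,S]$ its contribution is $1 - 1 = 0$; if it lies in both (hence in $[ABC,S]$) it is $1 - 1 - 1 + 1 = 0$; if it lies in neither the coefficient is just $1$. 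But $[C,S] - ([A,S] \cup [B,S]) = [C,S] - ([AC,S] \cup [BC,S])$ is precisely $\mathcal{L}(A,B|C)$ by Definition~\ref{def-latdec} (using again that $U \supseteq C$ together with $U \supseteq A$ is the same as $U \supseteq AC$, and similarly for $B$). Hence the left-hand side equals $\sum_{U \in \mathcal{L}(A,B|C)} \DeltaF{F}{U}$, and the proposition follows.

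The main obstacle is purely bookkeeping: being careful that the four filters, while defined via $A$, $B$, $ABC$ rather than $AC$, $BC$, $ABC$, do satisfy the containment and intersection relations claimed, which is exactly where pairwise disjointness of $A$, $B$, $C$ is used, and then checking the four cases of the coefficient computation so that the surviving index set matches the definition of $\mathcal{L}(A,B|C)$ on the nose. Since Proposition~\ref{prop-densities} already converts values of $F$ to sums of densities, no genuinely new idea is needed beyond this inclusion--exclusion cancellation; one may also want to remark that when $\indep{A}{B}{C}$ is trivial (say $B = \emptyset$) the equation is $F(C) + F(AC) = F(AC) + F(C)$, which holds vacuously, consistent with $\mathcal{L}(A,B|C) = \emptyset$ and an empty sum.
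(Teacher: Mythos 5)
Your proof is correct. The paper itself gives no proof of this proposition---it is cited as a special case of a more general result of Sayrafi and Van Gucht on differential constraints---and your argument (expand $F(C)+F(ABC)-F(AC)-F(BC)$ via Proposition~\ref{prop-densities} and check by inclusion--exclusion that the coefficient of $\DeltaF{F}{U}$ is $1$ exactly on $[C,S]-([A,S]\cup[B,S])=\mathcal{L}(A,B|C)$ and $0$ elsewhere) is precisely the standard direct verification one would supply; the only cosmetic remarks are that the filter identities $[AC,S]\cap[BC,S]=[ABC,S]$ and $[C,S]\cap[A,S]=[AC,S]$ hold without any appeal to disjointness of $A$, $B$, $C$, and that the case $U\notin[C,S]$ (all four indicators zero) deserves one explicit word.
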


\section{Properties of Classes of Functions - Soundness and Completeness}
\label{sec-properties}

In this section we study properties of classes of real-valued
functions that guarantee soundness and completeness of 
$\mathcal{A}$, respectively, for the additive implication problem.  How these results relate to 
probabilistic conditional independence implication will become clear in
Section~\ref{soundcompleteA} and Section~\ref{ci-completeness}.

\subsection{Soundness}
\label{subsec-soundness}

First, we define the notion of soundness of system $\mathcal{A}$ for a
given class of real-valued functions.

\begin{definition}[Soundness]
\label{def-soundness}
Let $\mathcal{F}$ be a class of real-valued functions. We say that
 $\mathcal{A}$ is \emph{sound} relative to $\mathcal{F}$ if, for each
 set $\mathcal{C}$ of CI statements and each CI statement $c$, we have
 that $\mathcal{C}\vdash c$ implies
 $\mathcal{C} \amodels_{\mathcal{F}}c$.
\end{definition}

In order to characterize soundness we introduce the following property
of classes of real-valued functions.

\begin{definition}[Zero-density property]
\label{def-strongunion-zero-density-property}
Let $\mathcal{F}$ be a class of real-valued functions.  We say that
$\mathcal{F}$ has the \emph{zero-density property} if, for each $F\in
\mathcal{F}$, for each CI statement $c$, and for each $U \in
\mathcal{L}(c)$, one has that if $\amodels_F c$, then $\DeltaF{F}{U}=0$.
\end{definition}

We can now provide various characterizations of the
soundness of inference system $\mathcal{A}$ for the additive implication problem
for CI statements.

\begin{theorem}
\label{theo-augment-soundness}
Let $\mathcal{F}$ be a class of real-valued functions. Then, the following statements are equivalent:

\begin{enumerate}
\item[(1)] Strong union and decomposition are sound inference rules relative to $\mathcal{F}$ for the additive implication problem;
\item[(2)] $\mathcal{F}$ has the zero-density property; and
\item[(3)] $\mathcal{A}$ is sound relative to $\mathcal{F}$ for the additive implication problem.
\end{enumerate}
\end{theorem}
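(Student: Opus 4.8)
The plan is to establish the cycle of implications $(1) \Rightarrow (2) \Rightarrow (3) \Rightarrow (1)$, using Proposition~\ref{prop-differentialsasdensities} as the bridge between density sums and a-satisfaction throughout.

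For $(2) \Rightarrow (3)$, suppose $\mathcal{F}$ has the zero-density property. By Theorem~\ref{theo-derivablelatdec}, if $\mathcal{C} \vdash c$ then $\mathcal{L}(\mathcal{C}) \supseteq \mathcal{L}(c)$. Take any $F \in \mathcal{F}$ that a-satisfies every statement in $\mathcal{C}$. For each $c' \in \mathcal{C}$ and each $U \in \mathcal{L}(c')$, the zero-density property gives $\DeltaF{F}{U} = 0$; hence $\DeltaF{F}{U} = 0$ for every $U \in \mathcal{L}(\mathcal{C})$, and in particular for every $U \in \mathcal{L}(c)$. Then $\sum_{U \in \mathcal{L}(c)} \DeltaF{F}{U} = 0$, so by Proposition~\ref{prop-differentialsasdensities}, $\amodels_F c$. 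This shows $\mathcal{C} \amodels_{\mathcal{F}} c$, i.e.\ $\mathcal{A}$ is sound.

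The implication $(3) \Rightarrow (1)$ is immediate: strong union and decomposition are single rules of $\mathcal{A}$, so if all of $\mathcal{A}$ is sound relative to $\mathcal{F}$, then in particular those two rules are sound (soundness of a rule being just the special case of the soundness definition applied to a one-step derivation).

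The interesting direction, and the one I expect to be the main obstacle, is $(1) \Rightarrow (2)$. I would argue contrapositively or directly by constructing, from a failure of the zero-density property, a concrete failure of soundness for strong union or decomposition. Suppose some $F \in \mathcal{F}$ a-satisfies a CI statement $c = \indep{A}{B}{C}$ yet $\DeltaF{F}{U} \neq 0$ for some $U \in \mathcal{L}(c)$. By Lemma~\ref{lem-witnessdec}, $U$ lies in some $[C, \overline{W}]$ with $W = \set{a,b}$, $a \in A$, $b \in B$; write $U = C'$ with $C \subseteq C' \subseteq \overline{W}$, so that $U \in \mathcal{L}(a,b \mid C') \subseteq \mathcal{L}(a,b\mid C)$. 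The strategy is to use strong union and decomposition (applied to $c$) to derive a CI statement $c''$ whose semi-lattice is the \emph{singleton} $\set{U}$, or at least a set on which the density sum is forced to equal $\DeltaF{F}{U}$ alone; then a-satisfaction of $c''$ would force $\DeltaF{F}{U} = 0$ by Proposition~\ref{prop-differentialsasdensities}, contradicting soundness. Concretely: from $c$, decomposition yields $\indep{a}{b}{C}$; strong union yields $\indep{a}{b}{C'} = \indep{a}{b}{U}$; and by choosing $C' = \overline{W}$ we get $\mathcal{L}(a,b\mid \overline{W}) = [\overline{W},\overline{W}] = \set{\overline{W}}$, a singleton. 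So when $U = \overline{W}$, soundness of strong union and decomposition forces $\DeltaF{F}{\overline{W}} = 0$. For general $U \in [C,\overline{W}]$ one performs a downward induction on the lattice $[C,\overline{W}]$: knowing $\DeltaF{F}{V} = 0$ for all $V$ strictly above $U$, a-satisfaction of $\indep{a}{b}{U}$ (derivable by decomposition then strong union) together with Proposition~\ref{prop-differentialsasdensities} and $\mathcal{L}(a,b\mid U) = [U,\overline{W}]$ collapses the sum $\sum_{V \in [U,\overline{W}]}\DeltaF{F}{V}$ to $\DeltaF{F}{U}$, yielding $\DeltaF{F}{U} = 0$. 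Since $U$ was arbitrary in $\mathcal{L}(c)$, $\mathcal{F}$ has the zero-density property. The delicate point is handling the case distinctions on whether $\indep{A}{B}{C}$ is trivial (empty semi-lattice, vacuous) and making sure every derived statement genuinely sits inside $\mathcal{A}$-derivability from $c$ alone so that hypothesis $(1)$ actually applies.
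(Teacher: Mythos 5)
Your proposal is correct and follows essentially the same route as the paper: the cycle $(1)\Rightarrow(2)\Rightarrow(3)\Rightarrow(1)$, with $(1)\Rightarrow(2)$ proved by downward induction over the witness intervals $[C,\overline{W}]$ of the semi-lattice, deriving $\indep{a}{b}{U}$ via \emph{decomposition} and \emph{strong union} and collapsing the density sum through Proposition~\ref{prop-differentialsasdensities}, and $(2)\Rightarrow(3)$ via the lattice inclusion $\mathcal{L}(\mathcal{C})\supseteq\mathcal{L}(c)$. The only cosmetic difference is that the paper's induction step works with statements $\indep{A'}{B'}{V}$ (with $A'\subseteq A$, $B'\subseteq B$) over the whole semi-lattice rather than your elementary $\indep{a}{b}{U}$ per witness, which is the same idea.
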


\begin{proof}
We first prove that statement (1) implies statement (2).  Let
$F\in \mathcal{F}$, let $\indep{A}{B}{C}$ be a CI statement, and
assume $\amodels_{F} \indep{A}{B}{C}$. We now show that
$\DeltaF{F}{V}=0$ for each $V \in \mathcal{L}(A, B | C)$. The proof
goes by downward induction on the semi-lattice $\mathcal{L}(A, B |
C)$. First, we observe that by Lemma~\ref{lem-witnessdec},
$\mathcal{L}(A, B | C) = \bigcup_{W \in \mathcal{W}(A, B | C)} [C,
\overline{W}]$.  Hence, for the base case we must prove that
$\DeltaF{F}{\overline{W}}=0$ for each $W \in \mathcal{W}(A, B|
C)$. Let $W = \set{a, b} \in \mathcal{W}(A, B | C)$. $\indep{a}{b}{C}$
is derivable from $\indep{A}{B}{C}$ using the inference rule
\emph{decomposition} and therefore $\amodels_F \indep{a}{b}{C}$. Since
\emph{strong union} is assumed sound and $C \subseteq \overline{W}$ it
follows that $\amodels_F\indep{a}{b}{\overline{W}}$. Since
$\mathcal{L}(a, b | \overline{W}) = \set{\overline{W}}$ we can invoke 
Proposition~\ref{prop-differentialsasdensities} to conclude that
$\DeltaF{F}{\overline{W}}=0$.  
For the induction step, let $V \in
\mathcal{L}(A, B | C)$. The induction hypothesis states that
$\DeltaF{F}{U}=0$ for all $U\in \mathcal{L}(A, B | C)$ that are strict
supersets of $V$. Similar to the base case, we can infer that
$\amodels_F \indep{A'}{B'}{V}$ with $A', B',$ and $V$ pairwise
disjoint, $A' \subseteq A$, $B' \subseteq B$, and $C \subseteq
V$. Hence, by Proposition~\ref{prop-differentialsasdensities},
$\sum_{U \in \Lat{A'}{B'}{V}} \DeltaF{F}{U}= \DeltaF{F}{V}=0$. Since
for all $U \in \Lat{A'}{B'}{V}$ with $U \neq V$, we have by
Proposition~\ref{prop-easy} that $V \subset U \in \mathcal{L}(A, B |
C)$ and, thus, $\DeltaF{F}{U}=0$ by the induction hypothesis.

We now prove that statement (2) implies statement (3). Let
$\mathcal{C}$ be a set of CI statements, let $c$ be a CI statement,
and assume that $\mathcal{C} \vdash c$.  Since $\mathcal{F}$ has the
zero-density property, we have that for each $F \in \mathcal{F}$, if $\amodels_F
\mathcal{C}$ then for each $U \in \mathcal{L}(\mathcal{C})$,
$\DeltaF{F}{U}=0$. From $\mathcal{C} \vdash c$ and
Proposition~\ref{prop-easy}, we have $\mathcal{L}(C) \supseteq
\mathcal{L}(c)$. Hence, for all $F \in \mathcal{F}$ we have 
that if $F$ \emph{a-satisfies} every CI statements in $\mathcal{C}$, then $F$ \emph{a-satisfies} $c$.
Thus, $\mathcal{C} \amodels_{\mathcal{F}}c$.

Finally, statement (1) follows trivially from (3).
\end{proof}

\subsection{Completeness}
\label{subsec-completeness}

As with soundness in Subsection~\ref{subsec-soundness}, we begin with
the definition of the notion of \emph{completeness} of inference system $\mathcal{A}$
for a given class of real-valued functions.

\begin{definition}[Completeness]
\label{def-completeness}
Let $\mathcal{F}$ be a class of real-valued functions. We say that
$\mathcal{A}$ is \emph{complete} for the additive implication problem
for CI statements relative to $\mathcal{F}$ if, for each set
$\mathcal{C}$ of CI statements and each CI statement~$c$, one has that
$\mathcal{C} \amodels_{\mathcal{F}}c$ implies $\mathcal{C} \vdash c$.
\end{definition}

We now introduce certain special real-valued functions that are at the
basis of defining a property guaranteeing completeness of system
$\mathcal{A}$.

\begin{definition}
\label{def-kronecker}
Let $V\subseteq S$.  The \emph{Kronecker-density function} of $V$,
denoted $\delta_V$, is the real-valued function such that
$\delta_V(V)=1$ and $\delta_V(X)=0$ if $X\neq V$.  The
\emph{Kronecker-induced function} of $V$, denoted $F_{V}$, is the
real-valued function whose density function is the Kronecker density
function of $V$, i.e., for each $X\subseteq S$, $F_{V}(X) =
\sum_{X\subseteq U\subseteq S}\, \delta_V(U)$, for each $X\subseteq S$.
\end{definition}

We can now define a property on classes of real-valued functions that
we will show to guarantee the completeness of system $\mathcal{A}$ for
the additive implication problem.

\begin{definition}[Kronecker property]
\label{def-kroneckerproperty}
Let $\mathcal{F}$ be a class of real-valued functions, and let $\Omega
\subseteq 2^S$.  We say that $\mathcal{F}$ has the \emph{Kronecker
property} on $\Omega$ if, for each $U \in \Omega$, there exists a $c_U
\in \mathbf{R}$ ($c_U\neq 0$), and a set $D_U =\set{d_V \in \mathbf{R}
\mid V \notin \Omega}$ such that the following real-valued function is
in $\mathcal{F}$:
\begin{eqnarray*}
F_{\Omega,c_U,D_U} := c_UF_{U} +  \sum_{\substack{V \subseteq S \\ V \notin \Omega}}d_VF_{V}.
\end{eqnarray*}
Note that for all $X \in \Omega$, $\Delta F_{\Omega,c_U,D_U}(X) = c_U$
if $X = U$ and $\Delta F_{\Omega,c_U,D_U}(X) = 0$ if $X \neq U$.
\end{definition}

\looseness=-1 Let $\Omega^{(2)}$ be the set of all subsets of $S$ that lack at
least two of their elements, i.e., $\Omega^{(2)}=\set{V\subset S\mid
|V|\leq |S|-2}$.  We can now prove that the Kronecker property on $\Omega^{(2)}$ implies the completeness of system $\mathcal{A}$.

\begin{theorem}
\label{theo-disjoint-kroneckercomplete}
\looseness=-1 Let $\mathcal{F}$ be a class of real-valued functions.
If $\mathcal{F}$ has the Kronecker property on $\Omega^{(2)}$, then
system $\mathcal{A}$ is complete for the additive implication problem
for CI statements relative to $\mathcal{F}$.
\end{theorem}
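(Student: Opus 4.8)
The plan is to prove the contrapositive: assuming $\mathcal{C}\not\vdash c$, I would construct a function $F\in\mathcal{F}$ that \emph{a-satisfies} every statement in $\mathcal{C}$ but not $c$, which shows $\mathcal{C}\not\amodels_{\mathcal{F}}c$. First dispose of the trivial case: if $c$ is trivial, then $\mathcal{C}\vdash c$ already by the \emph{triviality} rule, so there is nothing to prove; hence assume $c$ is non-trivial, so $\mathcal{L}(c)\neq\emptyset$. By Theorem~\ref{theo-derivablelatdec}, the hypothesis $\mathcal{C}\not\vdash c$ is equivalent to $\mathcal{L}(\mathcal{C})\not\supseteq\mathcal{L}(c)$, so I may fix a set $U\in\mathcal{L}(c)\setminus\mathcal{L}(\mathcal{C})$.

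The key structural observation is that $U\in\Omega^{(2)}$, and more generally that $\mathcal{L}(c')\subseteq\Omega^{(2)}$ for \emph{every} CI statement $c'$. Indeed, by Lemma~\ref{lem-witnessdec}, $\mathcal{L}(\indep{A}{B}{C'})=\bigcup_{W\in\mathcal{W}(A,B\mid C')}[C',\overline{W}]$, and every witness set $W=\set{a,b}$ has exactly two elements, so $|\overline{W}|=|S|-2$; since each member of $[C',\overline{W}]$ is a subset of $\overline{W}$, it lacks at least two elements of $S$ and hence lies in $\Omega^{(2)}$ (the trivial case gives the empty semi-lattice, which is vacuously contained). In particular $|U|\leq|S|-2$, so $U\in\Omega^{(2)}$.

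Now invoke the Kronecker property on $\Omega^{(2)}$ for this particular $U$: there is a nonzero constant $c_U$ and coefficients $D_U=\set{d_V\mid V\notin\Omega^{(2)}}$ such that $F:=F_{\Omega^{(2)},c_U,D_U}\in\mathcal{F}$, and, by the remark following Definition~\ref{def-kroneckerproperty}, $\DeltaF{F}{X}=c_U$ when $X=U$ and $\DeltaF{F}{X}=0$ for every other $X\in\Omega^{(2)}$. Combining this with the containment $\mathcal{L}(c')\subseteq\Omega^{(2)}$ and Proposition~\ref{prop-differentialsasdensities}, for any CI statement $c'$ we get $\sum_{V\in\mathcal{L}(c')}\DeltaF{F}{V}=c_U$ if $U\in\mathcal{L}(c')$ and $0$ otherwise. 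Since $U\notin\mathcal{L}(\mathcal{C})=\bigcup_{c'\in\mathcal{C}}\mathcal{L}(c')$, it lies in none of the $\mathcal{L}(c')$ with $c'\in\mathcal{C}$, so each such sum vanishes and $F$ \emph{a-satisfies} every statement in $\mathcal{C}$; since $U\in\mathcal{L}(c)$, the corresponding sum equals $c_U\neq0$, so $F$ does not \emph{a-satisfy} $c$. This produces the required witness and finishes the contrapositive, hence the theorem.

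The one point that deserves care — and is, I expect, the crux of why $\Omega^{(2)}$ is defined with the cutoff $|S|-2$ rather than something larger — is exactly the observation in the second paragraph: semi-lattices of CI statements never contain sets of size $|S|-1$ or $|S|$, so the coefficients $d_V$ attached to sets $V\notin\Omega^{(2)}$ are genuinely invisible to every expression $\sum_{V\in\mathcal{L}(c')}\DeltaF{F}{V}$, and the Kronecker-induced function behaves on all relevant semi-lattices precisely as a single Kronecker density concentrated at $U$. Everything else is a routine combination of Theorem~\ref{theo-derivablelatdec} and Proposition~\ref{prop-differentialsasdensities}.
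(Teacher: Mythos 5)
Your proposal is correct and follows essentially the same route as the paper's own proof: select $U \in \mathcal{L}(c) - \mathcal{L}(\mathcal{C})$ via Theorem~\ref{theo-derivablelatdec}, note $U \in \Omega^{(2)}$ by Lemma~\ref{lem-witnessdec}, and use the Kronecker property together with Proposition~\ref{prop-differentialsasdensities} to exhibit $F_{\Omega^{(2)},c_U,D_U}$ as a counter-model a-satisfying $\mathcal{C}$ but not $c$. The only differences are presentational (contrapositive rather than contradiction, plus your explicit remarks on the trivial case and on why densities outside $\Omega^{(2)}$ never enter the relevant sums, which the paper leaves implicit).
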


\begin{proof}
Assume that $\mathcal{F}$ has the Kronecker property on $\Omega^{(2)}$
but that $\mathcal{A}$ is not complete.  Then there exists a set
$\mathcal{C}$ of CI statements and a CI statement $c$ such that
$\mathcal{C} \amodels_{\mathcal{F}} c$ but $\mathcal{C} \not \vdash c$,
or, equivalently by Theorem~\ref{theo-derivablelatdec},
$\mathcal{L}(c) \nsubseteq \mathcal{L}(\mathcal{C})$. Let $U \in \mathcal{L}(c) - \mathcal{L}(\mathcal{C})$.  $U$ must be an element in
$\Omega^{(2)}$ by Lemma~\ref{lem-witnessdec}. Since $\mathcal{F}$ has
the Kronecker property on $\Omega^{(2)}$, we know that there exists a
$c_U \in \mathbf{R}$ ($c_U\neq 0$), and a set $D_U =\set{d_V \in
\mathbf{R} \mid V \notin \Omega^{(2)}}$ such that
$F_{\Omega^{(2)},c_U,D_U} \in \mathcal{F}$. By Definition~\ref{def-kroneckerproperty}, $\Delta F_{\Omega^{(2)},c_U,D_U}(X) = 0$ for all other $X \in \Omega^{(2)}$.
From Proposition~\ref{prop-differentialsasdensities} it follows that
$\amodels_{F_{\Omega^{(2)},c_U,D_U}}\, \mathcal{C}$, but
$\not\amodels_{F_{\Omega^{(2)},c_U,D_U}}\, c$, a contradiction to
$\mathcal{C}\amodels_{\mathcal{F}} c$.
\end{proof}

The following example demonstrates the zero-density and Kronecker properties.

\begin{example}
Let $S = \set{a, b, c}$, let $\mathcal{F}_1 = \set{F_{\emptyset}, F_{a}, F_{b}, F_{c}}$ and $\mathcal{F}_2 = \set{F_x}$, where the densities for each real-valued function are given by the table in Figure~\ref{table-1}. The densities of the remaining subsets of $S$ are assumed to be $0$ for each function. Now, $\Omega^{(2)} = \set{\emptyset, a, b, c}$ and, therefore, $\mathcal{F}_1$ has the Kronecker property on $\Omega^{(2)}$ since $F_{\Omega^{(2)},c_U,D_U} = F_U$ for all $U \in \Omega^{(2)}$, and  the zero-density property. $\mathcal{F}_2$ does not have the Kronecker property.  It also does not have the zero-density property as $\amodels_{F_x} \indep{b}{c}{\emptyset}$ but $\DeltaF{F_x}{\emptyset} \neq 0$.
\vspace{-3mm}
\begin{figure}[h!]
\begin{center}
\begin{tabular}{|c|cccccc|}
\hline  
&  \ \T\B \ & $\emptyset$ \ & \ $\set{a}$ \ & \ $\set{b}$ \ & \ $\set{c}$ \ & \\ 
\hline  
\T \ $\Delta F_{\emptyset}$ \ \ & &  0.1 & 0 & 0 & 0 &  \\ 
\hline
\T $\Delta F_{a}$   & & 0 & -0.3 & 0 & 0  & \\ 
\hline 
\T $\Delta F_{b}$  & & 0 & 0 & -0.6 & 0 &  \\ 
\hline  
\T $\Delta F_{c}$  & & 0 & 0 & 0 & 0.9 &  \\ 
\hline  
\T $\Delta F_x$  & & -0.2 & 0.2 & 0.6 & 0.3 &  \\ 
\hline 
\end{tabular} 
\caption{\label{table-1} Densities of several real-valued functions.}
\end{center}
\end{figure}
\vspace{-4mm}
\end{example}

\section{The Conditional Independence Implication Problem}
\label{prob-con-ind-imp}

While the theory presented so far has been concerned with the additive
implication problem for CI statements, it is also applicable to the
conditional independence implication problem.  The link between these
two problems is made with the concept of \emph{multi-information functions}
(Studen{\'y} \cite{STU:2005}) induced by probability measures.
In this paper we will restrict ourselves to the class of \emph{discrete}
probability measures.

\begin{definition}
A \emph{probability model} over $S= \set{s_1,\ldots, s_n}$ is a pair
$(dom, P)$, where $dom$ is a domain mapping that maps each $s_i$ to a
finite domain $dom(s_i)$, and $P$ is a probability measure having
$dom(s_1)\times \cdots \times dom(s_n)$ as its sample space.  For
$A=\{a_1,\ldots,a_k\}\subseteq S$, we will say that $\mathbf{a}$ is a
domain vector of $A$ if $\mathbf{a}\in dom(a_1)\times \cdots \times
dom(a_k)$.
\end{definition}

\looseness=-1 In what follows, we will only refer to probability measures, keeping their  probability models implicit.

\begin{definition}
\label{def-prob-semantics}
Let $\indep{A}{B}{C}$ be a CI statement, and let $P$ be a probability
measure.  We say that $P$ \emph{m-satisfies} $\indep{A}{B}{C}$, and
write $\mmodels_{P}\indep{A}{B}{C}$, if for every domain vector
$\mathbf{a}$, $\mathbf{b}$, and $\mathbf{c}$ of $A$, $B$, and $C$,
respectively, $P(\mathbf{c})P(\mathbf{a}, \mathbf{b}, \mathbf{c}) =
P(\mathbf{a}, \mathbf{c})P(\mathbf{b}, \mathbf{c})$.
\end{definition}

Relative to the notion of \emph{m-satisfaction} we can now define the
\emph{probabilistic conditional independence implication problem}.

\begin{definition}[Probabilistic conditional independence implication problem]
Let $\mathcal{C}$ be a set of CI statements, let $c$ be a CI
statement, and let $\mathcal{P}$ be the class of discrete probability measures.
We say that $\mathcal{C}$ \emph{m-implies} $c$ relative to
$\mathcal{P}$, and write $\mathcal{C} \mmodels_{\mathcal{P}} c$, if
each function $P \in \mathcal{P}$ that \emph{m-satisfies} the CI statements
in $\mathcal{C}$ also \emph{m-satisfies} the CI statement $c$.  The set
$\set{c \mid \mathcal{C} \mmodels_{\mathcal{P}} c}$ will be denoted by
$\mathcal{C}^*$.
\end{definition}

Next, we define the \emph{multi-information function} induced by a
probability measure~(Studen{\'y}~\cite{STU:2005}), which is based on the Kullback-Leibler divergence~(Kullback and Leibler~\cite{KL:1951}).

\begin{definition}
Let $P$ and $Q$ be two probability measures over a discrete sample space.
Then, the relative entropy (Kullback-Leibler divergence) $H$ is defined as 
$$H(P | Q) := \sum_{\mathbf{x}} \set{ P(\mathbf{x}) \mbox{ log} \frac{P(\mathbf{x})}{Q(\mathbf{x})}, \ P(\mathbf{x}) > 0},$$ with $\mathbf{x}$ ranging over all elements of the discrete sample space.
\end{definition}

\begin{definition}
\label{def-multiinformation}
Let $P$ be a probability measure, and let $H$ be the relative
entropy.  The \emph{multi-information
function} $M_P: 2^S \rightarrow [0, \infty]$ induced by $P$ is
defined as
$$M_P(A) := H(P^A | \prod_{a \in A} P^{\set{a}}),$$ 
for each non-empty subset $A$ of $S$ and $M_P(\emptyset) =
0$.\footnote{Here, $P^A$ and $P^{\set{a}}$ denote the marginal
probability measures of $P$ over $A$ and $\set{a}$,
respectively.} 
\end{definition}

The class of multi-information functions induced by the class of discrete probability measures $\mathcal{P}$ will be denoted by $\mathcal{M}$.
We can now state the fundamental result of Studen{\'y} that couples
the probabilistic CI implication problem with the additive implication problem
for CI statements relative to $\mathcal{M}$.

\begin{theorem}[Studen{\'y} \cite{STU:2005}]
\label{cor-imp-problem-equiv}
Let $\mathcal{C}$ be a set of CI statements and let $c$ be a CI
statement.  Then, $\mathcal{C} \amodels_{\mathcal{M}} c$ if and only if
$\mathcal{C} \mmodels_{\mathcal{P}} c$.
\end{theorem}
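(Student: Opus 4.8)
The plan is to reduce the theorem to a \emph{pointwise} statement: for every discrete probability measure $P$ and every CI statement $d = \indep{A}{B}{C}$, one has $\mmodels_P d$ if and only if $\amodels_{M_P} d$. This equivalence is exactly Studen{\'y}'s characterization recalled at the beginning of Section~\ref{additive-implication}, namely that $P$ \emph{m-satisfies} $\indep{A}{B}{C}$ iff $M_P(C)+M_P(ABC)=M_P(AC)+M_P(BC)$, which by Definition~\ref{def-differentialconstraints} is precisely $\amodels_{M_P}\indep{A}{B}{C}$. Granting this pointwise fact, the theorem follows by unwinding the definitions of \emph{a-implication} and \emph{m-implication}, together with the observation that the class $\mathcal{M}$ is, by construction, precisely $\set{M_P \mid P \in \mathcal{P}}$.

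Concretely, I would argue the two directions as follows. Assume $\mathcal{C}\amodels_{\mathcal{M}}c$ and let $P\in\mathcal{P}$ with $\mmodels_P \mathcal{C}$. By the pointwise equivalence, $\amodels_{M_P}d$ for every $d\in\mathcal{C}$; since $M_P\in\mathcal{M}$, the hypothesis gives $\amodels_{M_P}c$; applying the pointwise equivalence once more yields $\mmodels_P c$. Hence $\mathcal{C}\mmodels_{\mathcal{P}}c$. Conversely, assume $\mathcal{C}\mmodels_{\mathcal{P}}c$ and let $M\in\mathcal{M}$ with $\amodels_M \mathcal{C}$; write $M=M_P$ for some $P\in\mathcal{P}$. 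The pointwise equivalence turns $\amodels_{M_P}\mathcal{C}$ into $\mmodels_P\mathcal{C}$, the hypothesis gives $\mmodels_P c$, and the pointwise equivalence gives back $\amodels_{M_P}c$, i.e.\ $\amodels_M c$. Hence $\mathcal{C}\amodels_{\mathcal{M}}c$.

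The real content — and the step I expect to be the main obstacle, if one does not simply cite Studen{\'y} for it — is the pointwise equivalence itself. I would prove it by writing the multi-information in terms of Shannon entropies, $M_P(X)=\big(\sum_{x\in X}H(P^{\set{x}})\big)-H(P^X)$, and checking that the singleton terms cancel in the combination $M_P(C)+M_P(ABC)-M_P(AC)-M_P(BC)$ because $A$, $B$, $C$ are pairwise disjoint, leaving $H(P^{AC})+H(P^{BC})-H(P^{ABC})-H(P^{C})$, i.e., the conditional mutual information $I_P(A;B\mid C)$. One then invokes the Gibbs / log-sum inequality to see that $I_P(A;B\mid C)\ge 0$, with equality precisely when $P(\mathbf{c})P(\mathbf{a},\mathbf{b},\mathbf{c})=P(\mathbf{a},\mathbf{c})P(\mathbf{b},\mathbf{c})$ for all domain vectors $\mathbf{a},\mathbf{b},\mathbf{c}$ — that is, when $\mmodels_P\indep{A}{B}{C}$. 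Some care is needed with zero-probability domain vectors (they are harmless in the relative-entropy sum by the convention in the definition of $H$) and with the a priori possibility that some $M_P(X)=\infty$; but for a discrete $P$ on a finite product sample space all marginals have finite support, so $M_P$ is finite everywhere and the entropy rearrangement above is legitimate.
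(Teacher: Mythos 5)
Your argument is correct. There is no internal proof to compare against: the paper attributes this theorem to Studen\'y and cites his monograph, and the pointwise fact you reduce it to --- that $\mmodels_P d$ holds iff $M_P(C)+M_P(ABC)=M_P(AC)+M_P(BC)$, i.e.\ $\amodels_{M_P} d$ --- is precisely the result the paper itself recalls at the start of Section~\ref{additive-implication}. Your lifting of that pointwise equivalence to the implication problem is the routine unwinding the authors leave implicit, and it is sound because $\mathcal{M}$ is by definition exactly $\set{M_P \mid P \in \mathcal{P}}$ (surjectivity of $P \mapsto M_P$ onto $\mathcal{M}$ is what the converse direction needs, and you use it correctly). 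Your sketch of the pointwise fact itself --- rewriting $M_P(X)$ as $\sum_{x\in X}H(P^{\set{x}})-H(P^X)$, noting that the singleton entropies cancel by pairwise disjointness so that the defect equals the conditional mutual information $I_P(A;B\mid C)$, and invoking nonnegativity with the equality case of the log-sum inequality --- is the standard argument and essentially how Studen\'y proves it; the caveats you flag (finiteness of $M_P$ for finite discrete models, and zero-probability domain vectors, where both sides of the product equality vanish because $P(\mathbf{a},\mathbf{c})\leq P(\mathbf{c})$) are the right ones and are easily discharged.
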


\section{Saturated CI Statements - Soundness and Completeness of $\mathcal{A}$}
\label{soundcompleteA}

In this section we show that system $\mathcal{A}$ is sound and
complete for the probabilistic CI implication problem for
\emph{saturated} CI statements. We recall that a CI statement
$\indep{A}{B}{C}$ is saturated if $ABC=S$.  We begin by showing the
following technical lemma.

\begin{lemma}
\label{zero-density-wrt-ci}
The class of multi-information functions $\mathcal{M}$ induced by the class of discrete probability measures has the zero-density property
with respect to saturated CI statements.
\end{lemma}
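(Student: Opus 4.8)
The plan is to unpack what the zero-density property demands in this setting and reduce it to a concrete statement about the density (Möbius inversion) of a multi-information function. Fix a discrete probability measure $P$, let $M_P$ be its induced multi-information function, and let $c = \indep{A}{B}{C}$ be a \emph{saturated} CI statement with $ABC = S$; suppose $\amodels_{M_P} c$, i.e. $M_P(C) + M_P(S) = M_P(AC) + M_P(BC)$. I must show $\DeltaF{M_P}{U} = 0$ for every $U \in \mathcal{L}(c)$. Because $c$ is saturated, $\overline{W} = S \setminus \{a,b\}$ for any witness $W = \{a,b\} \in \mathcal{W}(c)$, so by Lemma~\ref{lem-witnessdec} the semi-lattice $\mathcal{L}(c) = \bigcup_{W}[C,\overline{W}]$ consists of sets that each omit at least one element of $A$ and at least one element of $B$ (and contain $C$). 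So the task is: for each such $U$, the Möbius coefficient $\DeltaF{M_P}{U}$ vanishes.

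The key tool will be an explicit formula for the density of the multi-information function. Writing $M_P(A) = \sum_{\mathbf a} P^A(\mathbf a)\log P^A(\mathbf a) - \sum_{a\in A}\sum_{\mathbf a_a} P^{\{a\}}(\mathbf a_a)\log P^{\{a\}}(\mathbf a_a)$, the singleton-entropy terms cancel under the alternating Möbius sum $\DeltaF{M_P}{U} = \sum_{U\subseteq V\subseteq S}(-1)^{|V|-|U|}M_P(V)$ whenever $|U|\le |S|-2$ — indeed those terms are sums of functions each depending on a single coordinate, and such "additively separable" set-functions have zero Möbius inversion at any set of co-size $\ge 2$. Hence for $U$ of co-size $\ge 2$, $\DeltaF{M_P}{U}$ equals the Möbius inversion of the \emph{joint-entropy} function $V \mapsto h(V) := \sum_{\mathbf v}P^V(\mathbf v)\log P^V(\mathbf v)$, which by a standard computation is a signed combination of conditional mutual informations. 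The cleanest route is to recall (or re-derive in one line via Proposition~\ref{prop-differentialsasdensities} applied to a single pair $\indep{a}{b}{V'}$ over the sublattice $[V', \overline{\{a,b\}}]$) that $\DeltaF{M_P}{U}$ is, up to sign, the conditional mutual information $I_P(a;b \mid U)$ where $\{a,b\} = S\setminus U$ when $|S\setminus U| = 2$, and an alternating sum of such quantities for larger co-size.

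With that in hand the argument mirrors the induction in the proof of Theorem~\ref{theo-augment-soundness}. From $\amodels_{M_P} c$ and Proposition~\ref{prop-differentialsasdensities}, $\sum_{U\in\mathcal{L}(c)}\DeltaF{M_P}{U} = 0$. Now run downward induction on $\mathcal{L}(c)$: at the top sets $\overline{W}$ (co-size exactly $2$, $\overline{W}=S\setminus\{a,b\}$) we have $\mathcal{L}(a,b\mid\overline{W}) = \{\overline{W}\}$, and $\amodels_{M_P}\indep{a}{b}{\overline{W}}$ because this saturated statement follows from $c$ by \emph{decomposition} then \emph{weak union} (which \emph{is} sound on $\mathcal{M}$, or directly: a saturated CI statement's satisfaction is inherited), so $\DeltaF{M_P}{\overline{W}} = 0$; then for $V\subset\overline{W}$, $c$ implies $\amodels_{M_P}\indep{A'}{B'}{V}$ for suitable $A'\subseteq A$, $B'\subseteq B$, and Proposition~\ref{prop-differentialsasdensities} gives $\DeltaF{M_P}{V} = -\sum_{U\supsetneq V, U\in\Lat{A'}{B'}{V}}\DeltaF{M_P}{U} = 0$ by the induction hypothesis, since all those $U$ lie in $\mathcal{L}(c)$ by the monotonicity used in Proposition~\ref{prop-easy}.

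The main obstacle — and the place where \emph{saturation} is essential — is establishing the base case, i.e. that $\amodels_{M_P} \indep{A}{B}{C}$ with $ABC = S$ forces $\amodels_{M_P}\indep{a}{b}{S\setminus\{a,b\}}$ for each $a\in A$, $b\in B$. For general CI statements \emph{weak union} is unsound on $\mathcal{M}$, so one cannot just enlarge the conditioning set; what saves us is that for saturated statements the single equation $M_P(C)+M_P(S) = M_P(AC)+M_P(BC)$ is equivalent to the conditional independence of $A$ and $B$ given $C$ in $P$ (this is exactly Studený's characterization, Theorem~\ref{cor-imp-problem-equiv} specialized, together with the fact that conditional independence $\indep{A}{B}{C}$ with $ABC=S$ implies $\indep{a}{b}{S\setminus\{a,b\}}$ by decomposition and weak union \emph{at the level of probability measures}, where both rules are sound). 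So I would phrase the base case probabilistically: $\amodels_{M_P}c \Leftrightarrow \mmodels_P c \Rightarrow \mmodels_P \indep{a}{b}{S\setminus\{a,b\}} \Leftrightarrow \amodels_{M_P}\indep{a}{b}{S\setminus\{a,b\}}$, and then finish with the density induction above. I expect the remaining steps (the separable-function Möbius cancellation and the downward induction) to be routine given the propositions already proved in the excerpt.
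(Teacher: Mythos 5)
Your proof is correct and is essentially the paper's own argument: a downward induction on $\mathcal{L}(c)$ whose base case and induction step rest on deriving the saturated statements $\indep{a}{b}{\overline{W}}$ and $\indep{A'}{B'}{V}$ by weak union and applying Proposition~\ref{prop-differentialsasdensities}, with the equivalence of a-satisfaction by $M_P$ and m-satisfaction by $P$ supplying the needed soundness. One minor correction: weak union \emph{is} sound relative to $\mathcal{M}$ for arbitrary CI statements (it is \emph{strong} union that fails), which is exactly how the paper phrases the step you instead route through probability measures---an equivalent detour, so nothing in your argument breaks.
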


\begin{proof}
We have to show that for each saturated CI statements $c$, for each $M
\in \mathcal{M}$, and for each $U \in \mathcal{L}(c)$, if $\amodels_{M}
c$, then $\DeltaF{M}{U}=0$.  The semi-graphoid inference rules are
sound relative to the class of probability measures.  Hence, in
particular, by Theorem~\ref{cor-imp-problem-equiv}, \emph{weak union} 
is sound relative to $\mathcal{M}$, i.e., $\set{ \indep{AD}{B}{C} }
\amodels_{\mathcal{M}} \indep{A}{B}{CD}$.  Let $M \in \mathcal{M}$, let
$\Delta M$ be the corresponding density function, and let $\amodels_M
\indep{A}{B}{C}$ with $ABC = S$. In addition, let $\indep{A}{B}{C}$ be
non-trivial since the proposition is obviously true for trivial CI
statements.  We will prove by downward induction on the semi-lattice
$\mathcal{L}(A, B | C)$ that $\DeltaF{M}{U}=0$ for each $U \in
\mathcal{L}(A, B | C)$. Note that this proof is similar to the proof
of Proposition~\ref{theo-augment-soundness}. (Here, \emph{weak union} is used
instead of \emph{decomposition} and \emph{strong union}). 

For the base case, we
show for each $W \in \mathcal{W}(A, B | C)$ that
$\DeltaF{M}{\overline{W}}=0$.  Let $W = \set{a, b}$.  By repeatedly
applying \emph{weak union} we can derive $\amodels_{M}
\indep{a}{b}{\overline{W}}$ because $ABC = S$. Now, since
$\mathcal{L}(a, b | \overline{W}) = \set{\overline{W}}$ we can
conclude that $\DeltaF{M}{\overline{W}}=0$.  

For the induction step, let $V \in \mathcal{L}(A, B | C)$. The induction hypothesis states
that $\DeltaF{M}{U}=0$ for each $U\in \mathcal{L}(A, B | C)$ with $U$
a strict superset of~$V$.  From the given CI statement
$\indep{A}{B}{C}$ we can derive, again by \emph{weak union},
$\indep{A'}{B'}{V}$ with $VA'B' = S$ since $V-C\subseteq
AB$. Since $\mathcal{L}(A', B' | V)$ contains only $V$ and strict
supersets $V'$ of $V$, with $V' \in \mathcal{L}(A, B | C)$, we can
conclude that $\sum_{U\in\Lat{A'}{B'}{V}}
\DeltaF{F}{U}=\DeltaF{F}{V}=0$ by the induction hypothesis.
\end{proof}

We are now in the position to prove that inference system $\mathcal{A}$ is sound and complete for the probabilistic implication problem for \emph{saturated} conditional independence statements.

\begin{theorem}
\label{thm-infsys-sat-complete}
$\mathcal{A}$ is sound and complete for the probabilistic conditional independence
implication problem for saturated CI statements.
\end{theorem}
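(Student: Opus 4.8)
The plan is to route both directions through the additive implication problem for the class $\mathcal{M}$ of multi-information functions and then to apply Studen{\'y}'s equivalence (Theorem~\ref{cor-imp-problem-equiv}), treating soundness and completeness separately.

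\emph{Soundness.} Let $\mathcal{C}$ be a set of saturated CI statements and $c$ a CI statement with $\mathcal{C}\vdash c$; I want $\mathcal{C}\mmodels_{\mathcal{P}}c$, and by Theorem~\ref{cor-imp-problem-equiv} it suffices to prove $\mathcal{C}\amodels_{\mathcal{M}}c$. I would re-run the implication $(2)\Rightarrow(3)$ from the proof of Theorem~\ref{theo-augment-soundness}, observing that the zero-density property is only ever invoked on the members of $\mathcal{C}$, all of which are saturated, so Lemma~\ref{zero-density-wrt-ci} suffices in place of the full zero-density property. Concretely: fix $M\in\mathcal{M}$ with $\amodels_M\mathcal{C}$. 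For each $c'\in\mathcal{C}$, Lemma~\ref{zero-density-wrt-ci} gives $\DeltaF{M}{U}=0$ for all $U\in\mathcal{L}(c')$, hence $\DeltaF{M}{U}=0$ for all $U\in\mathcal{L}(\mathcal{C})$. From $\mathcal{C}\vdash c$ and Proposition~\ref{prop-easy}, $\mathcal{L}(\mathcal{C})\supseteq\mathcal{L}(c)$, so $\DeltaF{M}{U}=0$ for all $U\in\mathcal{L}(c)$ and therefore $\sum_{U\in\mathcal{L}(c)}\DeltaF{M}{U}=0$, i.e.\ $\amodels_M c$ by Proposition~\ref{prop-differentialsasdensities}. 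Since $M$ was arbitrary, $\mathcal{C}\amodels_{\mathcal{M}}c$.

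\emph{Completeness.} Let $\mathcal{C}$ and $c$ be saturated with $\mathcal{C}\mmodels_{\mathcal{P}}c$; by Theorem~\ref{cor-imp-problem-equiv} it is enough to show $\mathcal{C}\amodels_{\mathcal{M}}c\Rightarrow\mathcal{C}\vdash c$, which I would prove by contraposition. Assume $\mathcal{C}\not\vdash c$, so by Theorem~\ref{theo-derivablelatdec} $\mathcal{L}(c)\not\subseteq\mathcal{L}(\mathcal{C})$; fix a $\subseteq$-maximal $U\in\mathcal{L}(c)-\mathcal{L}(\mathcal{C})$, which by Lemma~\ref{lem-witnessdec} and saturation of $c$ satisfies $|U|\leq|S|-2$. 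I would then construct a discrete probability measure $P$ from parity (exclusive-or) distributions over binary variables whose multi-information function $M_P$ has density supported on $\{X\mid U\subseteq X\}$; a single such block, in which the variables of $S-U$ are tied by one parity bit and the variables of $U$ are degenerate, already gives $\DeltaF{M}{X}=(\log 2)(-1)^{|S-X|}$ for $U\subseteq X$ and $0$ otherwise. Because $\mathcal{M}$ is closed under addition — the multi-information function of a product measure over the product of the domains is the sum of the two multi-information functions — I can take a positive combination of such blocks whose density still vanishes on $\mathcal{L}(\mathcal{C})$ but sums to a nonzero value over $\mathcal{L}(c)$; Proposition~\ref{prop-differentialsasdensities} then yields $\amodels_M\mathcal{C}$ and $\not\amodels_M c$, contradicting $\mathcal{C}\amodels_{\mathcal{M}}c$. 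Equivalently, one shows that $\mathcal{M}$ has a Kronecker-type property on the subsets that appear in $\mathcal{L}(c)$ for saturated $c$ (which turn out to be exactly the members of $\Omega^{(2)}$) and then quotes Theorem~\ref{theo-disjoint-kroneckercomplete}.

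I expect the completeness construction to be the main obstacle. Soundness is nearly immediate given Lemma~\ref{zero-density-wrt-ci}, but on the completeness side, since $M_P(\{a\})=0$ for every single variable no multi-information function can have density equal to a nonzero multiple of a single Kronecker density $\delta_U$ with $|U|\leq|S|-2$; the separating function must therefore be built from several parity-type pieces, and the delicate point is to arrange the cancellations so that $\sum_{V\in\mathcal{L}(c')}\DeltaF{M}{V}=0$ for every $c'\in\mathcal{C}$ while $\sum_{V\in\mathcal{L}(c)}\DeltaF{M}{V}\neq 0$. Pinning down the right $U$ — maximality is what keeps the relevant supersets of $U$ under control — and doing this bookkeeping cleanly, or equivalently exhibiting the needed multi-information functions, is where the work lies.
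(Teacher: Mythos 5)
Your soundness argument is correct and is essentially the paper's: the paper also obtains soundness by combining Lemma~\ref{zero-density-wrt-ci}, the argument of Theorem~\ref{theo-augment-soundness}, and Theorem~\ref{cor-imp-problem-equiv}, and your observation that the zero-density property is only ever needed on the (saturated) members of $\mathcal{C}$ is exactly the point that makes the restriction to Lemma~\ref{zero-density-wrt-ci} legitimate.

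For completeness, however, you have taken a genuinely different route from the paper, and your route has a real gap. The paper does not construct any measures at all: it observes that the semi-graphoid axioms are derivable in $\mathcal{A}$ and then cites Geiger and Pearl~\cite{GP:1993}, who proved that the semi-graphoid axioms are complete for the probabilistic implication problem restricted to saturated CI statements. Your plan instead requires exhibiting, for the maximal $U\in\mathcal{L}(c)-\mathcal{L}(\mathcal{C})$, a multi-information function $M$ with $\sum_{V\in\mathcal{L}(c')}\DeltaF{M}{V}=0$ for every $c'\in\mathcal{C}$ while $\sum_{V\in\mathcal{L}(c)}\DeltaF{M}{V}\neq 0$; this is precisely the step you defer (``where the work lies''), and it is the entire mathematical content. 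Note two concrete obstacles. First, your parity block for $U$ has density of a fixed alternating sign on all of $[U,\,S]$, and $\mathcal{M}$ is closed only under \emph{addition} (product measures), not subtraction; so adding further parity blocks $B_{U'}$ with $U'\supseteq U$ can never cancel the density at an intermediate set $X$ with $U\subsetneq X$, $|X|\leq|S|-2$, since the contributions at $X$ all carry the same sign $(-1)^{|S-X|}$ --- maximality of $U$ does not prevent such $X$ from lying in $\mathcal{L}(\mathcal{C})$, so pointwise vanishing on $\mathcal{L}(\mathcal{C})$ is unreachable this way, and arranging mere sum-cancellation over every $\mathcal{L}(c')$ simultaneously is exactly the kind of counterexample construction that Geiger--Pearl carry out. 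Second, your ``equivalently'' reformulation --- prove that $\mathcal{M}$ has the Kronecker property on $\Omega^{(2)}$ and invoke Theorem~\ref{theo-disjoint-kroneckercomplete} --- cannot lean on Proposition~\ref{prop-multi-information-kronecker}, because in the paper that proposition takes the present theorem as a hypothesis; using it here would be circular. Indeed the paper explicitly declines the direct-construction route you propose (see the opening of Section~\ref{ci-completeness}). So either supply the explicit family of measures (essentially re-proving Geiger--Pearl), or replace your completeness argument by the paper's: semi-graphoid derivability within $\mathcal{A}$ plus the Geiger--Pearl completeness theorem for saturated statements.
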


\begin{proof}
The soundness follows directly from Lemma~\ref{zero-density-wrt-ci},
Theorem~\ref{theo-augment-soundness}, and
Theorem~\ref{cor-imp-problem-equiv}. To show completeness, notice that the semi-graphoid axioms are derivable under inference system $\mathcal{A}$.
Furthermore, Geiger and Pearl proved that the semi-graphoid axioms are
complete for the probabilistic conditional independence implication problem for
\emph{saturated} CI statements (Geiger and Pearl~\cite{GP:1993}). 
\end{proof}

\section{CI Statements - Completeness of $\mathcal{A}$}
\label{ci-completeness}

In this section we will show that inference system $\mathcal{A}$ is complete
for the probabilistic conditional independence implication problem.  We first prove that $\mathcal{M}$ has the Kronecker property on $\Omega^{(2)}$.  To show this, it would be sufficient to construct a
set of discrete probability measures whose induced multi-information
functions are Kronecker-induced functions.  However, instead of taking
this route, we pursue a different approach by first focusing on
results with respect to \emph{saturated} CI statements.
We first need the following simple lemma.

\begin{lemma}
\label{lem-supset-rule-sat}
For $U \subseteq S$, $\set{X \in \Omega^{(2)} \mid X \supseteq U} =
\bigcup_{\substack{U_1 \cup U_2 = \overline{U}\\ U_1 \cap U_2 =
\emptyset}} \mathcal{L}(U_1, U_2 | U)$.
\end{lemma}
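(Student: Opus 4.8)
The plan is to prove the set equality in Lemma~\ref{lem-supset-rule-sat} by a straightforward double inclusion, reducing both sides to a combinatorial condition on the complement $\overline{U} = S - U$. Fix $U \subseteq S$. For the inclusion from left to right, take $X \in \Omega^{(2)}$ with $X \supseteq U$. Since $X$ lacks at least two elements of $S$, the set $\overline{X} = S - X$ has at least two elements, and $\overline{X} \subseteq \overline{U}$ because $U \subseteq X$. Pick any two distinct elements $u_1, u_2 \in \overline{X}$ and set $U_1$ and $U_2$ to be a partition of $\overline{U}$ with $u_1 \in U_1$ and $u_2 \in U_2$ (for instance $U_1 = \{u_1\} \cup (\overline{U} - \overline{X} - \{u_2\})$ and $U_2 = \overline{U} - U_1$; any split that separates $u_1$ from $u_2$ works). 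Then $U_1, U_2$ are disjoint with $U_1 \cup U_2 = \overline{U}$, and I claim $X \in \mathcal{L}(U_1, U_2 | U)$. By Definition~\ref{def-latdec}, $\mathcal{L}(U_1, U_2 | U) = [U, S] - ([U_1, S] \cup [U_2, S])$, so I must check $X \supseteq U$ (given), $X \nsupseteq U_1$, and $X \nsupseteq U_2$; the latter two hold since $u_1 \in U_1 - X$ and $u_2 \in U_2 - X$.

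For the reverse inclusion, suppose $X \in \mathcal{L}(U_1, U_2 | U)$ for some disjoint $U_1, U_2$ with $U_1 \cup U_2 = \overline{U}$. Then $X \supseteq U$, and since $X$ omits at least one element of $U_1$ and at least one element of $U_2$ (as $X \nsupseteq U_i$), and $U_1, U_2$ are disjoint, $X$ omits at least two distinct elements of $S$; hence $|X| \leq |S| - 2$, i.e. $X \in \Omega^{(2)}$. Taking the union over all disjoint decompositions $U_1 \cup U_2 = \overline{U}$ then gives the right-hand side as a subset of $\{X \in \Omega^{(2)} \mid X \supseteq U\}$. One minor degenerate case to note: if $|\overline{U}| \leq 1$, both sides are empty (the left because no superset of $U$ can omit two elements, the right because there is no decomposition of $\overline{U}$ into two nonempty parts — and if either $U_i$ is empty then $[U_i, S] = [\emptyset, S] = 2^S$ makes $\mathcal{L}(U_1, U_2 | U)$ empty), so the equality holds trivially; it is worth a one-line remark but presents no difficulty.

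I do not anticipate a genuine obstacle here: the statement is essentially a bookkeeping identity unwinding the definition of the semi-lattice, and the only thing requiring a moment's care is making sure the witnessing decomposition in the forward direction is chosen so that each part contains one of the two missing elements — which is always possible precisely because $X$ misses at least two elements. The lemma will then be used in the subsequent argument (presumably to express the "supset" structure of $\Omega^{(2)}$ in terms of saturated CI statements so that the zero-density result for saturated statements, Lemma~\ref{zero-density-wrt-ci}, can be leveraged to establish the Kronecker property of $\mathcal{M}$ on $\Omega^{(2)}$).
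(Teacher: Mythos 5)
Your proof is correct: both inclusions are verified exactly as needed (the forward direction by splitting $\overline{U}$ so that each part contains one of the at least two elements missing from $X$, the reverse by noting that $X \nsupseteq U_1$ and $X \nsupseteq U_2$ with $U_1,U_2$ disjoint forces $X$ to omit two distinct elements), and the degenerate case $|\overline{U}|\leq 1$ is handled. The paper states this as a ``simple lemma'' and gives no proof at all, so your double-inclusion argument is precisely the routine unwinding of Definition~\ref{def-latdec} that the authors left implicit; nothing further is required.
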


\begin{proposition}
\label{prop-multi-information-kronecker}
\looseness=-1 Let $\mathcal{F}$ be a class of real-valued functions.
If $\mathcal{A}$ is sound and complete for the additive
implication problem relative to
$\mathcal{F}$ for \emph{saturated} CI statements, then $\mathcal{F}$ has the Kronecker property on
$\Omega^{(2)}$.
\end{proposition}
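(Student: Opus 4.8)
The plan is to show that, assuming $\mathcal{A}$ is sound and complete for the additive implication problem relative to $\mathcal{F}$ for saturated CI statements, one can for each $U \in \Omega^{(2)}$ exhibit a function in $\mathcal{F}$ whose density is a nonzero multiple of $\delta_U$ on $\Omega^{(2)}$, thereby witnessing the Kronecker property. The natural way to produce such a function is to pick a suitable \emph{saturated} additive implication instance whose ``gap'' in the lattice ordering is exactly the set of supersets of $U$, and then use completeness to manufacture a function that fails that implication precisely at $U$.

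First I would fix $U \in \Omega^{(2)}$, so $\overline{U}$ has at least two elements. I would consider the set $\mathcal{C}_U$ of all saturated CI statements of the form $\indep{U_1}{U_2}{U}$ with $U_1 \cup U_2 = \overline{U}$ and $U_1 \cap U_2 = \emptyset$ and both $U_1,U_2$ nonempty, together with a single saturated statement $c_U$ that I want \emph{not} to be implied. A clean choice is to take $c_U = \indep{x}{\overline{U}-x}{U}$ for a fixed element $x \in \overline{U}$ (say), or more symmetrically to argue at the level of semi-lattices directly: by Lemma~\ref{lem-supset-rule-sat}, $\{X \in \Omega^{(2)} \mid X \supseteq U\} = \bigcup_{U_1 \cup U_2 = \overline{U},\, U_1 \cap U_2 = \emptyset} \mathcal{L}(U_1,U_2|U)$. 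The idea is to choose $\mathcal{C}_U$ and $c_U$ so that $\mathcal{L}(\mathcal{C}_U) \cup \mathcal{L}(c_U)$ misses $U$ itself but, crucially, $U \notin \mathcal{L}(\mathcal{C}_U)$ while $U \in \mathcal{L}(c_U)$ — i.e. $\mathcal{C}_U \not\vdash c_U$ by Theorem~\ref{theo-derivablelatdec}, the non-derivability being caused by the single missing set $U$. Since all statements involved are saturated and $\mathcal{A}$ is assumed complete for saturated statements relative to $\mathcal{F}$, it follows that $\mathcal{C}_U \not\amodels_{\mathcal{F}} c_U$, so there is an $F \in \mathcal{F}$ with $\amodels_F \mathcal{C}_U$ but $\not\amodels_F c_U$.

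Next I would extract the density information from this $F$ via Proposition~\ref{prop-differentialsasdensities}: $\amodels_F \indep{A}{B}{D}$ is equivalent to $\sum_{V \in \mathcal{L}(A,B|D)} \DeltaF{F}{V} = 0$. The statements in $\mathcal{C}_U$, combined with the structure in Lemma~\ref{lem-supset-rule-sat}, should force $\DeltaF{F}{V} = 0$ for every $V \in \Omega^{(2)}$ with $V \supseteq U$ and $V \neq U$ — this is exactly a downward-induction argument on the lattice $[U,S] \cap \Omega^{(2)}$ of the kind already used in the proof of Lemma~\ref{zero-density-wrt-ci}. Then the failure of $c_U$ gives $\sum_{V \in \mathcal{L}(c_U)} \DeltaF{F}{V} \neq 0$; since $\mathcal{L}(c_U) \subseteq \{X \in \Omega^{(2)} \mid X \supseteq U\}$ and every such $V \neq U$ has $\DeltaF{F}{V}=0$, the only surviving term is $V = U$, so $c_U := \DeltaF{F}{U} \neq 0$. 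Setting $D_U = \{d_V := \DeltaF{F}{V} \mid V \notin \Omega^{(2)}\}$ and observing $F = \sum_{V \subseteq S} \DeltaF{F}{V}\, F_V = c_U F_U + \sum_{V \notin \Omega^{(2)}} d_V F_V$ (using Definition~\ref{def-kronecker} and the fact that $\DeltaF{F}{V}=0$ for $V \in \Omega^{(2)}\setminus\{U\}$), we conclude $F = F_{\Omega^{(2)}, c_U, D_U} \in \mathcal{F}$, which is precisely the Kronecker property on $\Omega^{(2)}$ at $U$. Since $U$ was arbitrary, the proof is complete.

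The main obstacle is the middle step: choosing $\mathcal{C}_U$ and $c_U$ so that the \emph{only} lattice element that obstructs derivability is $U$, and then verifying — via the downward induction and Lemma~\ref{lem-supset-rule-sat} — that a-satisfaction of $\mathcal{C}_U$ really does kill \emph{all} densities $\DeltaF{F}{V}$ for $V \in \Omega^{(2)}$, $V \supsetneq U$. One has to be careful that the witness decompositions of the chosen saturated statements collectively cover all of $\{X \in \Omega^{(2)} \mid X \supsetneq U\}$ with enough redundancy for the induction to close, while $U$ itself lies outside $\mathcal{L}(\mathcal{C}_U)$; Lemma~\ref{lem-supset-rule-sat} is exactly the combinatorial fact that makes this bookkeeping work, so the argument should go through once the right family of saturated statements is written down.
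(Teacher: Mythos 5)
Your overall strategy---use completeness contrapositively to produce an $F \in \mathcal{F}$ that a-satisfies a carefully chosen set of saturated antecedents but fails a saturated consequence obstructed only at $U$, and then read off that $F$ itself is the required Kronecker witness---is essentially the paper's argument run directly rather than by contradiction, so the plan is sound in spirit. The execution, however, has a genuine gap at exactly the point you flag as ``the main obstacle.'' The concrete antecedent set you write down, namely all saturated statements $\indep{U_1}{U_2}{U}$ conditioned on $U$, cannot work: every such nontrivial statement has $U \in \Lat{U_1}{U_2}{U}$, and your proposed consequence $\indep{x}{\overline{U}-x}{U}$ is itself a member of this set, so $\mathcal{C}_U \vdash c_U$ and completeness yields no counterexample function at all. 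The paper's construction is precisely the missing ingredient: it takes $\indep{U}{\overline{U}}{\emptyset}$, all $\indep{U_1}{U_2}{\overline{U}}$ with $U_1 \cup U_2 = U$ disjoint, and for each $v \in \overline{U}$ all $\indep{V_1}{V_2}{U\cup\set{v}}$ with $V_1 \cup V_2 = \overline{U}-\set{v}$ disjoint, and verifies via Lemma~\ref{lem-supset-rule-sat} that the union of their semi-lattices is exactly $\Omega^{(2)} - \set{U}$.

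The second, deeper problem is that the Kronecker property requires $\DeltaF{F}{V}=0$ for \emph{every} $V \in \Omega^{(2)}-\set{U}$, not only for the supersets of $U$. Your density-vanishing step only addresses $V \supseteq U$ (any antecedent conditioned on a superset of $U$ has its semi-lattice inside $[U,S]$), so the final identification $F = c_U F_U + \sum_{V \notin \Omega^{(2)}} d_V F_V$ is unjustified: nothing in your construction controls $\DeltaF{F}{V}$ for $V \in \Omega^{(2)}$ with $V \nsupseteq U$, and such an $F$ is not of the form $F_{\Omega^{(2)},c_U,D_U}$. This is exactly why the paper's antecedent set includes the statements conditioned on $\emptyset$ and on $\overline{U}$. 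Relatedly, you never invoke the soundness half of the hypothesis. The paper needs it: soundness gives the zero-density property via Theorem~\ref{theo-augment-soundness}, which is what turns ``$F$ a-satisfies each antecedent'' (one linear equation per statement, by Proposition~\ref{prop-differentialsasdensities}) into ``$\DeltaF{F}{X}=0$ for every $X$ in the antecedents' semi-lattices.'' The Lemma~\ref{zero-density-wrt-ci}-style induction you cite relied on soundness of \emph{weak union} relative to $\mathcal{M}$; for an arbitrary class $\mathcal{F}$ that step must come from the soundness assumption, and for a statement such as $\indep{U}{\overline{U}}{\emptyset}$ no per-statement induction of the kind you sketch is available without it. So the proof can be repaired, but only by adopting the paper's richer antecedent set and explicitly routing the density-vanishing step through the zero-density property supplied by soundness.
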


\begin{proof}
  If $|S| \leq 1$, then $\Omega^{(2)} = \emptyset$ and the statement
  follows trivially. Hence, assume that $|S| \geq 2$.  Suppose that
  $\mathcal{A}$ is sound and complete for saturated CI statements but that $\mathcal{F}$ does not
  have the Kronecker property on $\Omega^{(2)}$.  Then there exists a
  set $U \in \Omega^{(2)}$ such that for each $c_U \in \mathbf{R}$
  ($c_U \neq 0$), and for each set $D_U=\set{d_V \in \mathbf{R} \mid V
  \notin \Omega^{(2)}}$ we have $F_{\Omega^{(2)},c_U,D_U}\not\in
  \mathcal{F}$.  Now, let $\mathcal{C}$ be the set of saturated CI statements
  $$\set{\ \indep{U}{\overline{U}}{\emptyset}\ } \ \cup
  \bigcup_{\substack{U_1 \cup U_2 = U \\ U_1 \cap U_2 = \emptyset}}
  \set{\ \indep{U_1}{U_2}{\overline{U}}\ } \ \cup $$
$$\bigcup_{v \in \overline{U}}\ \bigcup_{\substack{V_1 \cup V_2 =
    \overline{U}-\set{v} \\ V_1 \cap V_2 = \emptyset}} \set{\ 
  \indep{V_1}{V_2}{U \cup \set{v}}\ },$$
and let $c$ be the saturated CI statement $\indep{U_1}{U_2}{U}$ for
some non-empty sets $U_1$ and $U_2$.  Notice that such sets exist
because $|\overline{U}| \geq 2$.  By Lemma~\ref{lem-supset-rule-sat}
it is $\mathcal{L}(\mathcal{C}) = \Omega^{(2)} - \set{U}$ and $U \in
\mathcal{L}(c) \subseteq \Omega^{(2)}$ and therefore $\mathcal{L}(c)
\nsubseteq \mathcal{L}(\mathcal{C})$. Hence, by
Theorem~\ref{theo-derivablelatdec}, $\mathcal{C} \nvdash c$.  We now
show that $\mathcal{C}\amodels_{\mathcal{F}} c$ to obtain the 
contradiction to the completeness of $\mathcal{A}$. If there does not exist an $F \in \mathcal{F}$ which
\emph{a-satisfies} $\mathcal{C}$ we are done because then $\mathcal{C}
\amodels_{\mathcal{F}} c$ follows trivially.  Thus, let $F$ be in
$\mathcal{F}$ and assume that $\amodels_F \mathcal{C}$.  Since
$\mathcal{A}$ is sound relative to $\mathcal{F}$ for 
saturated CI statements, we know by
Theorem~\ref{theo-augment-soundness} that $\mathcal{F}$ has the
zero-density property.  Thus, $\DeltaF{F}{X}=0$ for each $X \in \Omega^{(2)}$
with $X \neq U$. But then $\DeltaF{F}{U}=0$ since otherwise there
would exist a $c_U \in \mathbf{R}$, $c_U = \DeltaF{F}{U} \neq 0$, and
a set $D_U=\set{d_V \in \mathbf{R} \mid V \notin \Omega^{(2)}}$ such
that $F_{\Omega^{(2)},c_U,D_U} = F \in \mathcal{F}$.  Hence, $F$ must
be a function whose density is zero on every element of
$\Omega^{(2)}$.  Thus, $\amodels_F c$ and it follows that
$\mathcal{C} \amodels_{\mathcal{F}} c$.
\end{proof}

\looseness=-1 The completeness of $\mathcal{A}$ for the CI implication problem can now be proved based on the previous results.

\begin{theorem}
\label{thm-complete-main}
$\mathcal{A}$ is complete for the probabilistic conditional independence
implication problem.
\end{theorem}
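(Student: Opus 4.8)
The plan is to assemble the theorem entirely from results already established, in four short moves. First I would use Theorem~\ref{cor-imp-problem-equiv}: it tells us that for \emph{every} pair $(\mathcal{C}, c)$ one has $\mathcal{C} \amodels_{\mathcal{M}} c$ if and only if $\mathcal{C} \mmodels_{\mathcal{P}} c$, and in particular this equivalence holds when $\mathcal{C}$ and $c$ consist of saturated CI statements. Combining this with Theorem~\ref{thm-infsys-sat-complete}, which states that $\mathcal{A}$ is sound and complete for the probabilistic implication problem restricted to saturated statements, we obtain that $\mathcal{A}$ is sound and complete for the \emph{additive} implication problem relative to $\mathcal{M}$ for saturated CI statements.

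Second, I would feed this into Proposition~\ref{prop-multi-information-kronecker} taking $\mathcal{F} := \mathcal{M}$: since $\mathcal{A}$ is sound and complete for the additive implication problem relative to $\mathcal{M}$ for saturated CI statements, $\mathcal{M}$ has the Kronecker property on $\Omega^{(2)}$. Third, I would invoke Theorem~\ref{theo-disjoint-kroneckercomplete}, again with $\mathcal{F} := \mathcal{M}$: the Kronecker property on $\Omega^{(2)}$ implies that $\mathcal{A}$ is complete for the additive implication problem for (general) CI statements relative to $\mathcal{M}$. Finally, I would translate back with Theorem~\ref{cor-imp-problem-equiv}: if $\mathcal{C} \mmodels_{\mathcal{P}} c$ then $\mathcal{C} \amodels_{\mathcal{M}} c$, hence $\mathcal{C} \vdash c$, which is exactly completeness of $\mathcal{A}$ for the probabilistic CI implication problem.

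The genuinely hard work is not in this proof but in the two ingredients it chains together: the bootstrapping carried out in Proposition~\ref{prop-multi-information-kronecker} (where the saturated-case completeness is exploited, via a carefully engineered family of saturated CI statements and Lemma~\ref{lem-supset-rule-sat}, to force Kronecker-type density behaviour of functions in $\mathcal{M}$), and the appeal to the Geiger--Pearl saturated-completeness theorem inside Theorem~\ref{thm-infsys-sat-complete}. Within the present argument the only points requiring care are that Theorem~\ref{cor-imp-problem-equiv} is applied twice and in both directions, and that Proposition~\ref{prop-multi-information-kronecker} and Theorem~\ref{theo-disjoint-kroneckercomplete} are instantiated at the concrete class $\mathcal{F} = \mathcal{M}$ — which is legitimate, since both are stated for an arbitrary class of real-valued functions. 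No new estimate or construction is needed here; the work is purely in citing the right prior results in the right order.
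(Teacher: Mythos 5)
Your proof is correct and follows essentially the same route as the paper's own argument: saturated-case soundness and completeness (Theorem~\ref{thm-infsys-sat-complete}, transferred to $\mathcal{M}$ via Theorem~\ref{cor-imp-problem-equiv}), then Proposition~\ref{prop-multi-information-kronecker} to get the Kronecker property on $\Omega^{(2)}$, then Theorem~\ref{theo-disjoint-kroneckercomplete} and Theorem~\ref{cor-imp-problem-equiv} to conclude. Your write-up merely makes explicit the translations between the probabilistic and additive settings that the paper leaves implicit.
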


\begin{proof}
We know from Theorem~\ref{thm-infsys-sat-complete} that $\mathcal{A}$ is sound and complete relative to $\mathcal{M}$ for saturated CI statements.  Now, by
Proposition~\ref{prop-multi-information-kronecker}, $\mathcal{M}$ has
the Kronecker property on $\Omega^{(2)}$.  Finally, through
Theorem~\ref{theo-disjoint-kroneckercomplete} and
Theorem~\ref{cor-imp-problem-equiv}, the statement follows.
\end{proof}

\begin{example}
\label{ex-formal-prop}
(Studen{\'y} \cite{STU:2005}) described the following sound inference rule relative to  discrete probability measures which refuted the conjecture (Pearl~\cite{GEI:1988}) that the semi-graphoid axioms are complete for the probabilistic CI implication problem:

$\indep{A}{B}{CD} \wedge \indep{C}{D}{A} \wedge \indep{C}{D}{B} \wedge \indep{A}{B}{\emptyset} \rightarrow
\indep{C}{D}{AB} \wedge \indep{A}{B}{C} \wedge \indep{A}{B}{D} \wedge \indep{C}{D}{\emptyset}.$

By applying \emph{strong contraction} to the statements $\indep{A}{B}{\emptyset}, \indep{C}{D}{A},$ and $\indep{C}{D}{B}$ we can derive the statement $\indep{C}{D}{\emptyset}.$ All the other statements can be derived using \emph{strong union}.
\end{example}

\begin{remark}
\looseness=-1 The inference system $\mathcal{A}$ without \emph{strong contraction} is \emph{not} complete. The consequence $\indep{C}{D}{\emptyset}$ of the clause from Example~\ref{ex-formal-prop} cannot be derived from the antecedents without \emph{strong contraction}.
\end{remark}

\section{Complete Axiomatization of Stable Independence}
\label{stableindependence}

\looseness=-1 When new information is available to a probabilistic system the set of
associated relevant CI statements changes dynamically. However, some
of the CI statements will continue to hold. These CI
statements were termed \emph{stable} by de Waal and van der Gaag
\cite{WAAL:2004}. A first investigation of their structural properties was undertaken by Mat\'{u}\v{s}  who used the term \emph{ascending} conditional independence (Mat\'{u}\v{s} \cite{MATUS:1992}). Every set of CI statements can be partitioned into its \emph{stable} and \emph{unstable} part. We will show that inference system $\mathcal{A}$ is sound and complete for the probabilistic CI implication problem for \emph{stable} conditional independence statements. 

\begin{definition}
Let $\mathcal{C}$ be a set of CI statements, and let
$\mathcal{C}^{SG+}$ be the semi-graphoid closure of
$\mathcal{C}$. Then $\indep{A}{B}{C}$ is said to be \emph{stable} in
$\mathcal{C}$, if $\indep{A}{B}{C'} \in \mathcal{C}^{SG+}$ for all
sets $C'$ with $C \subseteq C' \subseteq S$.
\end{definition}

\begin{theorem}
\label{prop-stable-sound-and-complete}
Let $\mathcal{C}_S$ be a set of stable CI statements. Then, $\mathcal{A}$ is sound and complete for the probabilistic conditional independence
implication problem for $\mathcal{C}_S$, or,
equivalently, $\mathcal{C}_S^* = \mathcal{C}_S^{+}$.
\end{theorem}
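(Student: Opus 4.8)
The plan is to split the asserted equality $\mathcal{C}_S^* = \mathcal{C}_S^{+}$ into its two inclusions, one of which is free. The inclusion $\mathcal{C}_S^*\subseteq\mathcal{C}_S^{+}$ is nothing but the completeness of $\mathcal{A}$ for the probabilistic implication problem, which holds for \emph{every} set of CI statements by Theorem~\ref{thm-complete-main}: if $c\in\mathcal{C}_S^*$, i.e.\ $\mathcal{C}_S\mmodels_{\mathcal{P}}c$, then $\mathcal{C}_S\vdash c$, so $c\in\mathcal{C}_S^{+}$. All the difficulty is in the reverse inclusion $\mathcal{C}_S^{+}\subseteq\mathcal{C}_S^*$, i.e.\ in showing that on \emph{stable} premises every rule of $\mathcal{A}$ -- in particular the unsound rule \emph{strong union} -- produces only $m$-implied consequences.

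For the soundness direction I would argue by induction on the length of an $\mathcal{A}$-derivation from $\mathcal{C}_S$, but with a strengthened hypothesis. Call a CI statement $\indep{A}{B}{C}$ \emph{$\mathcal{P}$-stable over $\mathcal{C}_S$} if $\mathcal{C}_S\mmodels_{\mathcal{P}}\indep{A}{B}{C'}$ for all $C'$ with $C\subseteq C'\subseteq S\setminus AB$; this is the $m$-implication analogue of the syntactic notion of stability. The claim to prove is that every statement in $\mathcal{C}_S^{+}$ is $\mathcal{P}$-stable over $\mathcal{C}_S$; taking $C'=C$ then gives $\mathcal{C}_S^{+}\subseteq\mathcal{C}_S^*$. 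For the base case, a premise $c=\indep{A}{B}{C}\in\mathcal{C}_S$ is stable, so all of its conditioning-enlargements lie in $\mathcal{C}_S^{SG+}$, and the semi-graphoid rules are sound for $\mathcal{P}$ (as recalled in the proof of Lemma~\ref{zero-density-wrt-ci}); hence $c$ is $\mathcal{P}$-stable, and trivial statements are $\mathcal{P}$-stable vacuously. For the induction step one checks that each rule of $\mathcal{A}$ preserves $\mathcal{P}$-stability. \emph{Symmetry}, \emph{decomposition} and \emph{contraction} are routine: one pushes the new conditioning variables into place using the sound semi-graphoid rules (\emph{decomposition} and \emph{weak union}) together with the $\mathcal{P}$-stability of the premises. \emph{Strong union} is precisely the case the strengthened hypothesis is designed for: replacing $\indep{A}{B}{C}$ by $\indep{A}{B}{CD}$ only \emph{shrinks} the family of conditioning sets $C'\supseteq CD$ that must be checked, so $\mathcal{P}$-stability of the conclusion is inherited verbatim from that of the premise -- this is where stability of the premises neutralizes the unsoundness of \emph{strong union}.

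The main obstacle is \emph{strong contraction}. Given that $\indep{A}{B}{C}$, $\indep{D}{E}{AC}$ and $\indep{D}{E}{BC}$ are each $\mathcal{P}$-stable over $\mathcal{C}_S$, one must show $\mathcal{C}_S\mmodels_{\mathcal{P}}\indep{D}{E}{C'}$ for every $C'$ with $C\subseteq C'\subseteq S\setminus DE$. The key observation is that \emph{strong contraction restricted to stable premises is an instance of Studen{\'y}'s sound rule} from Example~\ref{ex-formal-prop}: that rule derives $\indep{D}{E}{C}$ from the three given statements \emph{plus} the extra antecedent $\indep{A}{B}{DE}$ (more precisely, an appropriate context-relativization of it), and that missing fourth antecedent is exactly what the stability of $\indep{A}{B}{C}$ supplies. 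Concretely, I would fix $C'$, split $A$ and $B$ into their parts inside and outside $C'$, and then read off the four premises of the context-relativized Studen{\'y} rule from the three $\mathcal{P}$-stability hypotheses -- using \emph{weak union} to absorb the parts of $A,B$ lying in $C'$ into the conditioning set, and the enlargement direction of $\mathcal{P}$-stability to reach the required supersets of $C$, $AC$ and $BC$ -- and the rule then yields $\indep{D}{E}{C'}$. The one external ingredient is that the context-relativized version of Studen{\'y}'s rule is sound for $\mathcal{P}$, which follows from the unrelativized rule by conditioning each discrete measure on every value of the context variables; the remainder of this step is bookkeeping about which supersets are being invoked.
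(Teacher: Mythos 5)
Your proposal is correct, but the soundness half takes a genuinely different route from the paper. The paper treats completeness exactly as you do (it is immediate from Theorem~\ref{thm-complete-main}); for soundness, however, it stays inside the additive/lattice framework: it observes that \emph{strong union} and \emph{decomposition} are sound relative to $\mathcal{M}$ when the antecedents are stable (stability places every conditioning-enlargement in the semi-graphoid closure, whose rules are sound), invokes the equivalences of Theorem~\ref{theo-augment-soundness} to get soundness of all of $\mathcal{A}$ for the additive problem, and transfers back to probability measures via Theorem~\ref{cor-imp-problem-equiv}. You instead argue entirely on the probabilistic side, by induction on the length of an $\mathcal{A}$-derivation with the strengthened invariant that every derived statement is ``$\mathcal{P}$-stable'' over $\mathcal{C}_S$; \emph{strong union} then preserves the invariant for free, and \emph{strong contraction} is reduced to the context-relativized version of Studen{\'y}'s sound rule from Example~\ref{ex-formal-prop}, with $\mathcal{P}$-stability of the first antecedent supplying the missing fourth premise $\indep{A}{B}{DEC'}$. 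I checked the bookkeeping you defer: for a target context $C'$ one handles $A\subseteq C'$ or $B\subseteq C'$ directly from the second or third hypothesis, and otherwise applies the relativized rule to $A\setminus C'$, $B\setminus C'$ after absorbing $A\cap C'$, $B\cap C'$ into the context via \emph{weak union}; the relativization is sound by conditioning on each value of the context (zero-probability values are harmless under Definition~\ref{def-prob-semantics}). The trade-off: the paper's argument is a two-line corollary of its density machinery (Theorem~\ref{theo-augment-soundness}, Lemma~\ref{zero-density-wrt-ci}-style reasoning, and Theorem~\ref{cor-imp-problem-equiv}), whereas yours avoids densities and multi-information functions altogether on the soundness side, makes explicit \emph{why} stability neutralizes the unsound rule \emph{strong union}, and exposes the role of Studen{\'y}'s rule behind \emph{strong contraction} --- at the cost of importing the soundness of that rule (which the paper anyway cites) and of a longer case analysis.
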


\begin{proof}
The soundness follows from Theorem~\ref{theo-augment-soundness} and
from \emph{strong union} and \emph{decomposition} being sound
inference rules relative to $\mathcal{M}$ for stable CI statements.
The completeness follows from Theorem~\ref{thm-complete-main}.
\end{proof}

\begin{remark}
\looseness=-1 The previous result is also interesting with respect to the problem of finding a minimal, non-redundant representation of stable independence relations. Here, lattice-inclusion could aid the lossless compaction of representations of stable CI statements:  $\mathcal{L}(\mathcal{C}_S -\set{c}) = \mathcal{L}(\mathcal{C}_S)$ if and only if $c$ is redundant in $\mathcal{C}_S$.
\end{remark}

\section{Falsification Algorithm}
\label{heuristics}
Theorem~\ref{theo-derivablelatdec} and Theorem~\ref{thm-complete-main} lend
themselves to a \emph{falsification algorithm}, that is, an algorithm which can falsify 
instances of the probabilistic conditional independence implication problem. We consider the following corollary which directly follows from these two results.

\begin{corollary}
\label{cor-lattice-complete}
Let $\mathcal{C}$ be a set of CI statements, and let $\mathcal{P}$ be
the class of discrete probability measures.  If $\mathcal{L}(\mathcal{C})
\nsupseteq \mathcal{L}(c)$, then $\mathcal{C} \nmmodels_{\mathcal{P}} c$.
\end{corollary}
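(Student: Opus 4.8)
The plan is to obtain the corollary by contraposition, chaining the completeness of $\mathcal{A}$ for the probabilistic implication problem (Theorem~\ref{thm-complete-main}) with the semi-lattice characterization of derivability (Theorem~\ref{theo-derivablelatdec}). Concretely, I would assume the negation of the conclusion, namely $\mathcal{C} \mmodels_{\mathcal{P}} c$, and derive the negation of the hypothesis, $\mathcal{L}(\mathcal{C}) \supseteq \mathcal{L}(c)$.

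First, from $\mathcal{C} \mmodels_{\mathcal{P}} c$ and Theorem~\ref{thm-complete-main}, which states that $\mathcal{A}$ is complete for the probabilistic conditional independence implication problem, one gets $\mathcal{C} \vdash c$. Second, applying the ``only if'' direction of Theorem~\ref{theo-derivablelatdec} to $\mathcal{C} \vdash c$ yields $\mathcal{L}(\mathcal{C}) \supseteq \mathcal{L}(c)$. Taking the contrapositive of the composite implication $\bigl(\mathcal{C} \mmodels_{\mathcal{P}} c \Rightarrow \mathcal{L}(\mathcal{C}) \supseteq \mathcal{L}(c)\bigr)$ gives exactly the statement: if $\mathcal{L}(\mathcal{C}) \nsupseteq \mathcal{L}(c)$, then $\mathcal{C} \nmmodels_{\mathcal{P}} c$.

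There is no genuine obstacle here: the corollary is a bookkeeping consequence of results already in hand, and all the mathematical content lives in Theorem~\ref{thm-complete-main} (via the Kronecker property of $\mathcal{M}$ on $\Omega^{(2)}$ and Studen{\'y}'s equivalence) and in Theorem~\ref{theo-derivablelatdec}. The only point requiring care is the direction of the set inclusion and of the implications: completeness runs from $\mmodels$ to $\vdash$, and Theorem~\ref{theo-derivablelatdec} then runs from $\vdash$ to $\supseteq$, so the composite runs from $\mmodels$ to $\supseteq$, whose contrapositive is the desired falsification criterion. It is worth noting in passing that the converse direction fails --- $\mathcal{L}(\mathcal{C}) \supseteq \mathcal{L}(c)$ does \emph{not} imply $\mathcal{C} \mmodels_{\mathcal{P}} c$ --- because \emph{strong union} is not sound relative to $\mathcal{P}$, which is precisely why this yields only a one-directional falsification test rather than a decision procedure.
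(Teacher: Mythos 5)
Your proof is correct and follows exactly the route the paper intends: the corollary is stated as a direct consequence of Theorem~\ref{thm-complete-main} (completeness, giving $\mathcal{C} \mmodels_{\mathcal{P}} c \Rightarrow \mathcal{C} \vdash c$) and Theorem~\ref{theo-derivablelatdec} (derivability implies $\mathcal{L}(\mathcal{C}) \supseteq \mathcal{L}(c)$), whose composite contrapositive is precisely the falsification criterion. Your additional observation that the converse fails because \emph{strong union} is not sound relative to $\mathcal{P}$ matches the paper's framing of this as a one-directional test.
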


\looseness=-1 If the falsified implications were, on average, only a small
fraction of all those that are falsifiable, the result would be disappointing from a practical point of view. Fortunately, we will not
only be able to show that a large number of implications can be
falsified by the ``lattice-exclusion" criterion identified in
Corollary~\ref{cor-lattice-complete}, but also that polynomial time
heuristics exist that provide good approximations of said
criterion.  

\noindent
{\bf Falsification Criterion}.
{\tt Input}: A set of CI statements $\mathcal{C}$ and a CI statement
$c$. {\tt Test}: if $\mathcal{L}(\mathcal{C}) \nsupseteq
\mathcal{L}(c)$, return ``false", else return ``unknown."

{\bf Heuristic 1}.  {\tt Input}: A set of CI statements $\mathcal{C}$
and a CI statement $\indep{A}{B}{C}$. {\tt Test}: if for each
$\indep{A'}{B'}{C'} \in \mathcal{C}$ it is $C \nsupseteq C'$, return
``false", else return ``unknown."

\noindent
{\bf Heuristic 2}.  {\tt Input}: A set of CI statements $\mathcal{C}$,
and a CI statement $\indep{A}{B}{C}$. {\tt Test}: if there exists one
$W \in \mathcal{W}(A, B | C)$ such that for all $\indep{A'}{B'}{C'}
\in \mathcal{C}$ it is $W \notin \mathcal{W}(A', B' | C')$, return
``false", else return ``unknown."

\looseness=-1 It follows from Lemma~\ref{lem-witnessdec} that if one of the
two heuristics returns ``false,'' then $ \mathcal{L}(\mathcal{C})
\nsupseteq \mathcal{L}(c)$, and therefore $\mathcal{C}
\nmmodels_{\mathcal{P}} c$ by Corollary~\ref{cor-lattice-complete}.

\begin{example}
Let $S$ be a finite set, and $A, B, C$, and $D$ be pairwise disjoint subsets of $S$.
The inference rule \emph{intersection}, 
$\indep{A}{B}{DC} \wedge \indep{A}{D}{BC} \rightarrow \indep{A}{BD}{C}$,
\looseness=-1 is \emph{not} sound relative to the class of discrete probability measures.  Heuristic~1 can reject this instance of the implication problem in polynomial time in the size of $S$.
\end{example}

\begin{remark}
The falsification criterion leads in fact to a \emph{family} of polynomial time heuristics. While Heuristic 1 checks if the unique \emph{meet} (greatest lower bound) of the semi-lattice $\mathcal{L}(c)$ is not in $\mathcal{L}(\mathcal{C})$ and Heuristic 2 if the (potentially multiple) \emph{joins} (least upper bounds) of the semi-lattice $\mathcal{L}(c)$ are not in $\mathcal{L}(\mathcal{C})$, we may select additional elements in the semi-lattice $\mathcal{L}(c)$ that are located between these two extrema to derive more falsification  heuristics.
\end{remark}

\begin{figure}[t!]
  \includegraphics*[viewport=42 110 422 352, scale=0.62]{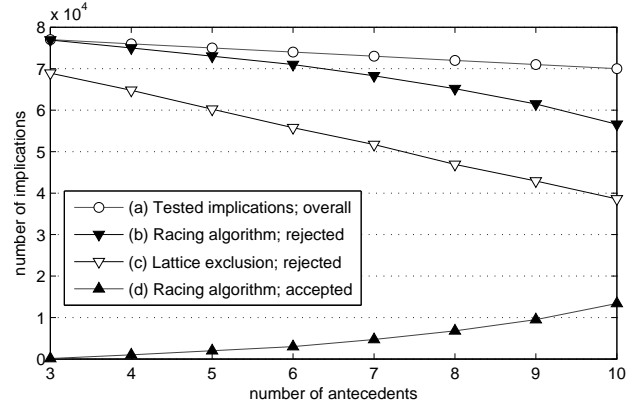}
\vspace{-6mm}
\caption{\label{graph1}Rejection and acceptance curves of the racing and falsification algorithms, respectively, for five attributes.}
\end{figure}
\begin{figure}[t!]
\includegraphics*[viewport=42 110 422 352, scale=0.62]{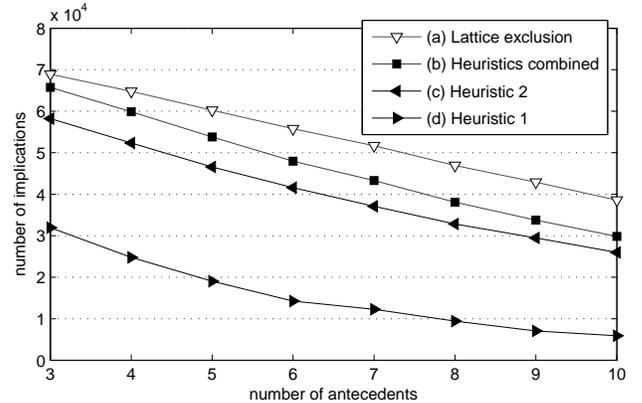}
\vspace{-6mm}
\caption{\label{graph2}Falsifications based on the lattice-exclusion criterion and the heuristics, for five attributes. The combination of the heuristics reaches 95\% of the falsifications of the full-blown lattice exclusion criterion for 3 antecedents down to 77\% for 10 antecedents.}
\end{figure}

With our experiments we want to show that (1) the lattice-exclusion criterion can 
falsify a large fraction of all falsifiable implications, and (2) that the two provided heuristics are good approximation of the full-blown lattice-exclusion criterion.
To make our outcomes comparable to existing results, we
adopted the experimental setup for the \emph{racing algorithm}
from Bouckaert and Studen{\'y}~\cite{BOU:2007} (also using 5 attributes). A thousand sets of antecedents each were generated
by randomly selecting 3 up to 10 elementary CI statements, resulting
in a total of 8000 sets of antecedents.\footnote{An elementary CI
statement is of the form $\indep{a}{b}{C}$, where $a,b\in S$ and $C \subseteq S-\set{a,b}$.} The \emph{falsification algorithm} and the heuristics were run on these sets with each of the remaining elementary CI statements as consequence, one at a time.  Since
there are 80 elementary CI statements for 5 attributes, this resulted
in 77000 implication problems for sets with 3 antecedents, 76000 for
sets with 4 antecedents, down to 70000 for sets with 10 antecedents.

The rejection procedure of the \emph{racing algorithm} is rooted in the
theory of imsets: an instance is rejected
if one of the supermodular functions constructed by the algorithm is a counter-model for this instance. It has exponential running time and might
reject implications that actually \emph{do} hold.  This is a
consequence of the fact that $\mathcal{M}$
is a \emph{strict} subset of the class of all supermodular functions. (See
Examples~4.1~and~6.2 in Studen{\'y}'s monograph~\cite{STU:2005}.)  The
\emph{falsification algorithm} based on Corollary~\ref{cor-lattice-complete}, on the other hand, ensures that if an instance of the implication problem is rejected, then it is guaranteed not to be valid. 

Figure~\ref{graph1} shows the rejection curves of the \emph{racing algorithm} (b) and the  \emph{falsification algorithm} (c), respectively, and the acceptance curve of the racing algorithm (d).  The area between the two rejection curves can be interpreted as the ``decision gap", i.e., the amount of instances of the implication problem for which the validity is unknown. The curve marked with circles (a) 
depicts the total number of tested instances.  Figure~\ref{graph2}
depicts the rejection curves for the \emph{falsification algorithm} (a), 
for the combination of Heuristic 1 and Heuristics 2 (b), and for
Heuristic 2 (c) and Heuristic 1 (d) run separately.  The
combination of the heuristics compares favorable with the
full-blown \emph{falsification criterion}. The experiments also show that Heuristic 2 is more effective than
Heuristic 1.

\section{Conclusion and Future Work}
\label{discussion}

\looseness=-1 A complete inference system for the probabilistic 
conditional independence implication problem was presented and related
to the lattice-exclusion criterion. We derived polynomial time
approximations that can be used as a preprocessing step to efficiently
shrink the search space of possibly valid inferences.  We already
have experimental evidence that our approach scales to much
larger instances of the implication problem than
those reported on in this paper. This could, for instance, provide insights into combinatorial bounds for the number of (stable) CI structures. 
The falsification algorithm and the heuristics can be combined with algorithms that infer valid implications, like the one based on  structural imsets which is used as part of the \emph{racing algorithm} \cite{BOU:2007}.  In addition, the lattice exclusion criterion and the heuristics can be utilized to store information about conditional independencies more efficiently, using non-redundant representations. Overall, we believe that the lattice-theoretic framework for reasoning about conditional independence is a novel and powerful tool.  We conjecture that there are interesting connections between our theory and Studen{\'y}'s theory of imsets which we will continue to investigate.

\subsubsection*{Acknowledgments}

We thank Remco Bouckaert for providing us with the source code of the \emph{racing algorithm}.  We also want to thank Milan
Studen{\'y} for the information he provided us concerning the complete
axiomatization of the implication problem for
four attributes, Franti\v{s}ek Mat\'{u}\v{s} for helpful feedback on an earlier draft, and  the anonymous reviewers whose comments helped to improve the quality of the paper.

\bibliographystyle{abbrv}

\end{document}